\pgfplotsset{compat=1.10}
\tikzset{cross/.style={cross out, draw=black, minimum size=2*(#1-\pgflinewidth), inner sep=0pt, outer sep=0pt},
cross/.default={2pt}}
\newcommand{\model}[1]{\mathcal{M}_{#1}}
\newcommand{\intervalnet}{\mathcal{I}}
\DeclareMathOperator*{\argmin}{arg\,min}
\newcommand{\mshift}{S}
\newcommand{\dist}[3]{d_{#1}(#2,#3)}
\newcommand{\distance}[3]{\lVert #1 - #2\rVert_{#3}}
\newcommand{\ourmethod}{\texttt{AP$\Delta$S}~}
\newcommand{\R}{\mathbb{R}}
\newcommand{\name}{\Delta}
\newcommand{\relu}{\text{ReLU}}
 \newcommand{\relua}{\draw[line width=1.5pt] (-1em,0) -- (0,0)
                                (0,0) -- (0.75em,0.75em);}
\newtheorem{theorem}{Theorem}
\newtheorem{proposition}[theorem]{Proposition}
\newtheorem{lemma}[theorem]{Lemma}
\newtheorem{definition}{Definition}
\newcommand{\BibTeX}{B\kern-.05em{\sc i\kern-.025em b}\kern-.08em\TeX}
\begin{document}


\begin{frontmatter}


\paperid{2069} 


\title{Rigorous Probabilistic Guarantees \\for Robust Counterfactual Explanations}


\author[A]{\fnms{Luca}~\snm{Marzari}\footnote{Equal contribution.\\ Contact authors: luca.marzari@univr.it, f.leofante@imperial.ac.uk \\ Code:\url{https://github.com/lmarza/APAS} \\ This is the full version of the paper by the same title appearing at ECAI 2024. This version includes proofs and additional experimental details.}}
\author[B]{\fnms{Francesco}~\snm{Leofante}\footnotemark}
\author[A]{\fnms{Ferdinando}~\snm{Cicalese}} 
\author[A]{\fnms{Alessandro}~\snm{Farinelli}}

\address[A]{Department of Computer Science, University of Verona, Italy}
\address[B]{Department of Computing, Imperial College London, United Kingdom}


\begin{abstract}
%
    We study the problem of assessing the robustness of counterfactual explanations for deep learning models. We focus on \textit{plausible model shifts} altering model parameters and propose a novel framework to reason about the robustness property in this setting. To motivate our solution, we begin by showing for the first time that computing the robustness of counterfactuals with respect to plausible model shifts is NP-complete. As this (practically) rules out the existence of scalable algorithms for exactly computing robustness, we propose a novel probabilistic approach which is able to provide tight estimates of robustness with strong guarantees while preserving scalability. Remarkably, and differently from existing solutions targeting plausible model shifts, our approach does not impose requirements on the network to be analyzed, thus enabling robustness analysis on a wider range of architectures. Experiments on four binary classification datasets indicate that our method improves the state of the art in generating robust explanations, outperforming existing methods on a range of metrics.
\end{abstract}

\end{frontmatter}

\section{Introduction}

Understanding and interpreting the decisions of black-box deep learning models has become a dominant goal of Explainable AI (XAI). Several strategies have been proposed to this end. In this paper, we focus on counterfactual explanations (CFX) (see~\cite{StepinACP21,KarimiBSV23} for recent surveys on the topic), which aim to 
demystify the decision-making of a Deep Neural Network (DNN) by showing how an input needs to be changed to yield a different, typically more desirable, decision. Consider the widely used example of a loan application, where a mortgage applicant represented by an input $x$ with features \emph{unemployed} status, $25$ years of age, and \emph{low} credit rating applies for a loan and is rejected by the bank's AI. A CFX for this decision could be a slightly modified input, where increasing credit rating to \emph{medium} would result in the loan being granted. 

As CFXs have the potential to influence decisions with strong societal implications, their reliability has become the subject of intensive study (see~\cite{Jiang2024survey} for a survey). In particular, recent work has highlighted issues related to the robustness of CFXs against \emph{Plausible Model Shifts} (PMS)~\cite{UpadhyayJL21,Jiang_Leofante_Rago_Toni_2023}, showing that the validity of CFXs is likely to be compromised when bounded perturbations are applied to the parameters of a DNN, e.g., as a result of fine-tuning~\cite{UpadhyayJL21,BlackWF22,nguyen2022robust,Jiang_Leofante_Rago_Toni_2023,Hammanetal23}. Consider the loan example: if retraining occurs while the applicant is working toward improving their credit rating, without robustness, their modified case may still result in a rejected application, leaving the bank liable due to their conflicting statements.

In this paper, we focus on this troubling phenomenon and advance the state of the art in CFX robustness research in several directions.
More specifically, in §~\ref{sec:hardness}, we study the computational complexity of exactly determining whether a CFX is robust to PMS, formally showing for the first time that answering this question is NP-complete. As our result rules out the existence of practical algorithms to exactly compute the CFX robustness, we shift our attention toward methods for obtaining probabilistic guarantees on the robustness of CFX under model shifts. We, therefore, consider the work by~\citet{Hammanetal23}, where they propose a probabilistic approach to compute the robustness of CFX under \emph{Naturally-Occurring Model Shifts} (NOMS). Even though both PMS and NOMS notions are commonly used in the literature, very little is known about their potential interplay, and the question of whether robustness to NOMS subsumes robustness to PMS is still unresolved. We settle this question in §~\ref{sec:compare}, where we show that these two notions capture profoundly different scenarios, proving that robustness guarantees given for NOMS do not directly extend to PMS. Having settled this, in §~\ref{sec:main_results} we present \ourmethod, a novel sampling-based certification algorithm which allows to determine a provable probabilistic bound on the maximum shift a CFX can tolerate under PMS. Unlike existing solutions for robustness under PMS, our approach comes with significantly reduced computational requirements and does not make any assumption on the underlying DNN, thus making it applicable to a wider range of architectures. Finally, to assess the effectiveness of our proposed solution, in §~\ref{sec:experiments}, we study the general performance of our certification algorithm and provide a comprehensive comparison of the proposed approach against several state-of-the-art methodologies for CFX robustness certification. Crucially, we show that our approach can also be used to generate robust CFXs, outperforming existing methods on a number of metrics from the CFX literature.

\section{Related Work}

Various methods for generating CFXs for DNNs have been proposed. The seminal work of \cite{Wachter_17} framed the task of generating CFXs as a gradient-based optimization problem and proposed a loss that promotes CFX \emph{validity} (i.e., the CFX successfully changes the classification outcome of the network) and \emph{proximity} (i.e., the CFX is as close as possible to the original input for some distance metric). In addition to these metrics, other important properties have been highlighted as crucial for the practical applicability of CFXs. Prominent examples include \emph{plausiblity} (i.e., the CFX must lie on the data manifold)~\cite{poyiadzi2020face,PawelczykBK20} and \emph{actionability} (i.e., the changes suggested by the CFX must be achievable by the user in practice)~\cite{ustun2019actionable}.
Differently from these works, here we focus on the robustness property of CFXs.

Several forms of CFX robustness have been studied in the literature~\cite{Jiang2024survey}. Robustness to input changes is the focus of, e.g.~\cite{Slack_21,Dominguez-Olmedo_22,ZhangCWL23,LeofantePotyka24}, where solutions are devised to ensure that explanation algorithms return similar CFXs for similar inputs. In another line of work, \cite{GuyomardFGBT23,Virgolin_23,LeofanteL23,PawelczykDHKL23} considered the problem of generating adversarially robust CFXs that preserve validity under imperfect (or noisy) execution. Robustness to model multiplicity is instead considered in, e.g.~\cite{PawelczykBK20,LeofanteBR23,jiang2023recourse}, where  CFXs that preserve validity across sets of models are sought. However, the study of these forms
of robustness is outside the scope of this paper as our focus is on model shifts. Robustness to model shift has been studied in, e.g.~\cite{UpadhyayJL21,BlackWF22,nguyen2022robust,Jiang_Leofante_Rago_Toni_2023,Hammanetal23}. Of these, the approaches of~\cite{UpadhyayJL21} and~\cite{Jiang_Leofante_Rago_Toni_2023} are the most closely related to our work. The former presents an approach to generate robust CFXs under PMS using techniques from continuous optimization, which is able to guarantee robustness in the average-case scenario. The latter instead solves the same problem using abstraction techniques and discrete optimization tools, obtaining robustness guarantees that hold under worst-case conditions. Given their relevance, both approaches will be considered for an extensive experimental comparison in §~\ref{sec:experiments}.
\section{Background}\label{sec:background}

\paragraph{\textbf{(Neural) Classification model}.} Let $\mathcal{X} \subseteq \R^d$ denote the input space of a \emph{classifier} $\model{\theta}: \mathcal{X} \rightarrow [0,1]$ mapping an input $x \in \mathcal{X}$ to an output probability between $0$ and $1$. We consider classifiers implemented by feed-forward DNNs parameterized by a \emph{(parameter) vector} $\theta \in \Theta \subseteq \R^k$. Given two parameter vectors $\theta, \theta' \in \Theta$, we refer to the corresponding classifiers $\model{\theta}$ and $\model{\theta'}$ as \emph{instantiations} of the same parametric classifier $\model{\Theta}$. We assume concrete valuations of $\theta$ are learned from a set of labeled inputs as customary in supervised learning settings~\cite{Goodfellowetal}. Once $\theta$ has been learned, the classifier can be used for inference. Without any loss of generality, we focus on binary classification tasks, i.e., the classification decision produced by $\model{\theta}$ for an unlabeled input $x$ is $1$ if $\model{{\theta}}(x) \geq 0.5$, and $0$ otherwise. 

\paragraph{\textbf{Counterfactual explanations.}}
Existing methods in the literature define CFXs as follows. 

\begin{definition}\label{def:cfx} 
Consider an input $x \in \mathcal{X}$ and a classifier $\model{{\theta}}$ s.t. $\model{{\theta}}(x) < 0.5$. Given a distance metric $d: \mathcal{X} \times \mathcal{X} \rightarrow \mathbb{R}^+$, a \emph{(valid) counterfactual explanation} is any $x'$ such that:
\begin{subequations}
\begin{alignat*}{2}
&\argmin_{x'\in \mathcal{X}}  && d(x,x') \\ 
&\text{subject to} && \quad \model{{\theta}}(x') \geq 0.5 
\end{alignat*}
\end{subequations}
\end{definition}

Intuitively, given an input $x$ for which the classifier produces a negative outcome, a counterfactual explanation is a new input $x'$ which is similar to $x$, e.g., in terms of some specified distance between features values, and for which the classifier predicts a different outcome. Common choices for $d$ include the $\ell_1$ and $\ell_{\infty}$ norms~\cite{Wachter_17}, which will also be used in this work.

\paragraph{\textbf{Robustness to model shifts}}
Among several notions of robustness, recent work has placed emphasis on generating CFXs that remain valid under (slight) shifts in the classifier they were generated for. While existing approaches rely on a diverse range of techniques to solve this problem, they all share a common understanding of what constitutes a model shift, which we present next.
 
 \begin{definition}[\citet{Jiang_Leofante_Rago_Toni_2023}]
Let $\model{\theta}$ and $\model{\theta'}$ be two instantiations of a parametric classifier $\model{\Theta}$. 
For $0 \leq p  \leq \infty$, the \emph{p-distance between $\model{\theta}$ and $\model{\theta'}$} is defined as $\dist{p}{\model{\theta}}{\model{\theta'}} = \distance{\theta}{\theta'}{p} $.
\label{def:distance_between_models}
\end{definition}

\begin{definition}[\citet{Jiang_Leofante_Rago_Toni_2023}]
A \emph{model shift} (w.r.t. a fixed $p$-distance) is a function $\mshift $ mapping a classifier $\model{\theta}$ into another classifier $\model{\theta'} = \mshift(\model{\theta})$ such that:   
\begin{itemize}
    \item $\model{\theta}$ and $\model{\theta'}$ are instantiations of the same 
    $\model{\Theta}$;
    \item $\dist{p}{\model{\theta}}{\model{\theta'}} > 0$. 
\end{itemize}
\label{def:model_shift}
\end{definition} 

Informally, a model shift captures changes in the parameters of a DNN, but does not affect its architecture. Based on this definition, we can formalize the robustness property for a CFX as follows.

\begin{definition}
    Consider an input $x \in \mathcal{X}$ and a classifier $\model{{\theta}}$ s.t. $\model{{\theta}}(x) < 0.5$. Let $x'$ be a counterfactual explanation computed for $x$ s.t. $\model{{\theta}}(x') \geq 0.5$ . Given a set of model shifts $\Delta$, we say that the counterfactual $x'$ is \emph{$\Delta$-robust} if $\mshift(\model{\theta})(x') \geq 0.5$ for all $\mshift \in \Delta$.
\end{definition}

The definition of a model shift can be specialized to better characterize how $\theta$ is allowed to change under $\mshift$. In the following, we report two most commonly studied notions of model shifts: \emph{Naturally-Occurring Model Shifts} and \emph{Plausible Model Shifts}. 

\begin{definition}[ \citet{Hammanetal23} (NOMS)]
\label{def:NOMC}
Consider a classifier $\model{{\theta}}.$ A set of model shifts $\Delta$ is said to be \emph{naturally occurring} if for a (randomly) chosen model shift $S$ from $\Delta$ and $\model{{\theta'}} = S(\model{{\theta}})$ being the new classifier obtained after applying  $\mshift$ to $\model{{\theta}}$ the following hold: 
\begin{itemize}
    \item $\mathbb{E}[\model{{\theta'}}(x)] = \model{{\theta}}(x)$; where the expectation is over the randomness of $\model{{\theta'}}$ given a fixed value of $x$;
    \item $\text{Var}[\model{{\theta'}}(x)] = \nu_x$, where $\nu_x$ represents the maximum variance of the prediction of $\model{{\theta'}}(x)$, and whenever $x$ lies on the data manifold $\mathcal{X}$,  $\nu_x$  is upper bounded by a small constant $\nu$;
    \item If $\model{{\theta}}$ is Lipschitz continuous for some $\gamma_1$, then $\model{{\theta'}}(x)$ is also Lipschitz continuous for some $\gamma_2$.
\end{itemize}
\end{definition}


Broadly speaking, a naturally-occurring model shift allows the application of arbitrary changes to $\theta$ as long as the overall behavior of the classifier is not affected. This is in contrast with the notion of plausible model shift~\cite{UpadhyayJL21,Jiang_Leofante_Rago_Toni_2023}, which requires changes to be bounded.  

\begin{definition}[\citet{Jiang_Leofante_Rago_Toni_2023} (PMS)]
\label{def:deltachange}
Consider a classifier $\model{{\theta}}$ and a new classifier $\model{{\theta'}} = S(\model{{\theta}})$ obtained after applying a model shift $\mshift$ to $\model{{\theta}}$. Given some $\delta \in \mathbb{R}_{>0}$ and $0 \leq p \leq \infty$, $\mshift$ is said to be \emph{plausible} (w.r.t.\ the choice of parameters $\delta$ and $p$)
if $\dist{p}{\model{\theta}}{\mshift(\model{\theta})} \leq \delta$.
\end{definition} 

Given an upper bound $\delta$, a set of PMS $\Delta$ can then be obtained by considering all shifts $\mshift$ that satisfy Definition~\ref{def:deltachange}, i.e.  $\Delta = \{ \mshift \mid \dist{p}{\model{\theta}}{\mshift(\model{\theta})} \leq \delta\}$. In the rest of this paper, we will sometimes use $\Delta_{\delta}$ to make the upper bound $\delta$ explicit. Given a set $\Delta$, a realization of $\Delta$ can be defined as follows.

\begin{definition}\label{def:delta_plausible}
   Given a classifier $\model{\theta},$ and a set of plausible model shifts $\Delta$, we say that a {\em realization of ${\Delta}$} is a classifier $\model{\theta'}$  
%
    such that $\theta' \in [\theta - \delta, \theta+\delta]$
\end{definition}



\citet{Jiang_Leofante_Rago_Toni_2023} proposed to reason about robustness under PMS using an Interval Neural Network (INN)~\cite{PrabhakarA19} as an intermediate representation. The interval weights of an INN allow to represent an over-approximation of all the possible models obtainable under a set of PMS $\Delta$, thus providing a compact representation of the problem. In this work, instead, we address the robustness of CFXs by reasoning directly in terms of the set $\Delta$. This results in several computational improvements as we will discuss in section \ref{sec:main_results}.

\section{Checking Robustness is Hard}\label{sec:hardness}

In this section, we study the computational complexity of deciding whether a given counterfactual explanation is robust in the presence of model shifts. Our aim here is to better understand the computational challenges arising from this problem and to use these results to guide the development of novel, more efficient certification procedures. Without loss of generality, we consider PMS to encode the problem.
Deciding whether a given CFX $x'$ is robust to a set of PMS $\Delta$ requires to check whether $\Delta$ contains at least one realization which yields a classification outcome that is different from the intended CFX outcome. For instance, in the case of the loan example, this would correspond to a model rejecting the loan ($\model{\theta}(x') < 0.5$) as opposed to accepting it as intended. This problem can be formulated as follows.

\begin{tcolorbox}
  \vspace{-0.1cm}

{\sc Distinct-Realizations Problem ({\sc DRP})}
\vspace{2mm}

{\bf Input}: a classifier $\model{\theta},$ a set $\Delta$ of PMS, an input $x$, and a threshold $\tau$.

\vspace{2mm}

{\bf Output}: yes $\iff \exists\; \model{\theta_1}, \model{\theta_2}$ $\subseteq {\Delta}$ s.t

\hspace{2.7cm} $\model{\theta_1}(x) < \tau \leq \model{\theta_2}(x)$.
 
  \vspace{-0.1cm}

\end{tcolorbox}



\begin{theorem}\label{th:hardness-CFX}
    Deciding {\sc DRP} is NP-complete.
\end{theorem}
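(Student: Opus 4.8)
The plan is to establish both membership in NP and NP-hardness. For membership, I would take as a certificate a pair of parameter vectors $\theta_1,\theta_2$ lying in the box $[\theta-\delta,\theta+\delta]$, and verify in polynomial time, by two forward passes, that they are realizations and satisfy $\model{\theta_1}(x) < \tau \leq \model{\theta_2}(x)$. Since $x$ is fixed and $\model{\theta}$ is a ReLU network, its output depends on $\theta$ in a piecewise manner determined by the finitely many activation regions, so when the instance is positive a witnessing pair of polynomial bit-length can be found inside these cells; this is the comparatively routine direction.

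For hardness I would reduce from 3-SAT (equivalently, from the NP-complete problem of deciding whether a ReLU network can exceed a threshold over a box of inputs). The central idea is that varying first-layer weights over a box, with the input $x$ held fixed, simulates a free input ranging over a box: if an input coordinate is pinned to a large constant $x_i = 1/(2\delta)$ and its incoming weight ranges over $[\theta_i-\delta,\theta_i+\delta]$ with $\theta_i=\delta$, then the pre-activation $w_i x_i$ sweeps exactly $[0,1]$. Each Boolean variable $z_j$ is thus encoded by one free first-layer weight, the two ends of its interval representing $\mathrm{true}$/$\mathrm{false}$, while the remaining layers form a fixed logic circuit that, through $\relu$ gadgets, computes the number of satisfied clauses and compares it to $m$ (the clause count). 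Setting $\tau$ at this maximum, a realization reaches the threshold iff the encoded assignment satisfies all clauses, so a realization with $\model{\theta_2}(x)\geq\tau$ exists iff the formula is satisfiable.

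Two points need care. First, because \emph{all} weights, including those of the logic layers, are perturbed uniformly by $\pm\delta$, I must ensure these perturbations cannot flip any gate's Boolean behaviour: I would take $\delta$ polynomially small and scale the first-layer input up by $1/\delta$, so the logic operates with $O(1)$ margins while each perturbed product contributes only $O(\delta)$. The same scaling, together with rounding gadgets, is what forces the continuous relaxation to attain its maximum at genuine $\{0,1\}$ assignments, preventing fractional weight settings from spuriously crossing $\tau$. Second, the {\sc DRP} asks for a \emph{straddling} pair, so a realization strictly below $\tau$ must always exist; I would guarantee this with a ``kill'' weight $w$ of small positive nominal value whose box stays strictly positive, feeding a large negatively-signed fixed input into the output. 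Driving $w$ to the high end of its interval subtracts a strictly larger amount than at the low end, pushing the output below $\tau$ regardless of satisfiability, while the maximum (attained at the low end) still equals the satisfaction-based value. Hence the instance is a yes-instance iff the formula is satisfiable.

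I expect the main obstacle to be the tension between the two approximations: the first-layer encoding wants $\delta$ large enough to sweep the full input range, whereas robustness of the logic layers wants $\delta$ small. The scaling trick resolves this in principle, but the bookkeeping is delicate: one must prove that the designed-in margins strictly dominate the $O(\delta)$ perturbation errors accumulated across all layers, and that the output gap separating satisfiable from unsatisfiable instances remains bounded away from zero by an inverse polynomial, so that the polynomially small $\delta$ and polynomially large scaling keep the reduction of polynomial size. Once these margins are established, NP-hardness follows, and together with membership this yields NP-completeness.
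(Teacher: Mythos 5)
Your proposal follows essentially the same route as the paper: NP membership via a certificate pair of realizations checked by two forward passes, and NP-hardness by reduction from {\sc 3-SAT}, encoding each Boolean variable as a free weight ranging over a width-$2\delta$ interval amplified by a $1/\delta$-scale constant so that it sweeps $[0,1]$, feeding a fixed clause-evaluation circuit whose $O(\delta)$ weight perturbations are dominated by built-in margins, and using rounding gadgets (the paper's ``discretizer'' gadgets) to prevent fractional weight settings from spuriously crossing the threshold. The only differences are cosmetic: the paper places the amplifying constant in a weight of its ``generating'' gadget rather than in a large input coordinate, and it obtains the always-existing below-threshold realization directly from non-discrete or unsatisfying weight choices instead of your ``kill''-weight device.
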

\par
\noindent
{\em Proof sketch.}
    The inclusion of DRP in NP requires two forward propagations of $x$ through two concretizations (i.e., the certificates) checking if $\tau$ is between the two computed outputs. This is clearly polynomial in the size of the classifier. Regarding the hardness, we can show that {\sc 3-SAT} reduces to {\sc DRP}.
    Given a formula $\phi$ we can produce a DNN classifier $\model{\theta}$, an input $x$ and 
    a $\delta$ (maximum shift on the edge weights), defining a ${\Delta}$ such that $\phi$ is satisfiable 
    if and only if there exists another  $\model{\theta'}$ which is also a realization
    of ${\Delta}$ such that $0.5 \leq \model{\theta'}(x) $ and  $\model{\theta}(x) < 0.5.$


We build on the reduction of \cite{ReluplexJournal}, which, 
given a formula $\phi$ produces a neural network such that satisfying assignments for $\phi$ 
are encoded into inputs to the network producing a desired output. 
 
A first observation is that if we start with the neural network $\model{\theta}$ produced by the reduction of \cite{ReluplexJournal}, and replace any weight equal to 1 with the interval $[1-2\delta, 1]$ and  
any weight -1 with the interval $[-1-2\delta, -1],$ we obtain $\Delta$ such that for any $\model{\theta}'$ being a realization of $\Delta$ and any input $x,$
$\model{\theta'}(x) \in [(1-p_1(\delta))\model{\theta}(x), (1+p_2(\delta))\model{\theta}(x)],$
where $p_1, p_2$ are some fixed polynomials defined by the number of layers in $\model{\theta}$. Therefore, we can extend the hardness of \cite{ReluplexJournal} to the case of plausible model shift, where instead of checking for the output to be $t$, we could use an additional layer to check whether the output is in the interval  $[(1-p_1(\delta))t, (1+p_2(\delta))t]$. 

However, with respect to the hardness proof of \cite{ReluplexJournal}, there is a substantial difference in our problem since, besides starting from a DNN with each weight in $[\theta_i-\delta, \theta_i+\delta]$, we want to map satisfying assignments to realizations of the $\Delta$, rather than inputs of the network. In fact, encoding assignments to choices of the weights turns out to require significantly more.

Our reduction is designed so that the assignments to the variables of the CNF are encoded to the choices of some specific weights of the DNN, henceforth referred to as the {\em network edge main inputs}. These are the weights that determine the outputs of a gadget that we call the generating gadget. A generating gadget has a fixed input (representing the CFX), uses only intervals of width $2\cdot\delta$, and produces a value in $[0,1]$ where the Boolean $false$ is represented by a value close to 0 and the Boolean $true$ is represented by a value close to 1. 

The output of the generating gadget (one per each variable of the formula $\phi$) is sent both to the network simulating the CNF formula 
(as in \cite{ReluplexJournal}) and to a further gadget (the discretizer-gadget) that controls whether the output of the generating gadget is a discrete value in $\{0,1\}.$ 

The output of the network simulating the formula is then combined with the output of the discretizer-gadgets in such a way that the final output is $< 0.5$ whenever, either there is one of the outputs of the generating-gadgets (determined by the choice of the {\em network edge main input}) which is not in $\{0,1\}$ or the output of the subnetwork simulating the formula implies that the {\em network edge main input} encode and assignment not satisfying some clause of $\phi.$ The complete proof can be found in the supplementary material. \qed

From Theorem~\ref{th:hardness-CFX}, it follows that deciding whether a CFX $x'$ is not robust to a set of PMS $\Delta$ is NP-complete. This results lead to the following corollary.

\begin{theorem}\label{thm:density}
Given a model $\model{\theta}$ and a set of plausible model shifts $\Delta$, computing the number of model shifts in $\Delta$ for which a given CFX $x'$ is \emph{$\name$-robust} is NP-hard. 
\end{theorem}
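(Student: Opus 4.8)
The plan is to establish NP-hardness by a polynomial-time Turing reduction from the non-robustness decision problem — which, as noted immediately after Theorem~\ref{th:hardness-CFX}, is NP-complete — to the counting problem. The key observation is a dichotomy: if $N$ denotes the number of realizations in $\Delta$ for which $x'$ remains valid (i.e.\ $\model{\theta'}(x') \geq 0.5$) and $T$ denotes the total number of realizations in $\Delta$, then $x'$ is $\name$-robust if and only if $N = T$, and it fails to be $\name$-robust exactly when $N < T$. Thus a single call to an oracle computing $N$, together with the directly computable total $T$, decides whether $x'$ is non-robust, i.e.\ solves {\sc DRP} with $\tau = 0.5$ (note that $\model{\theta}$ itself is a realization witnessing the ``$\geq \tau$'' side, so only the existence of a violating realization is in question). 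Consequently, a polynomial-time algorithm for the counting problem would place the NP-complete non-robustness problem in P, and NP-hardness of counting follows.

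A second, more self-contained route reuses the {\sc 3-SAT} reduction underlying Theorem~\ref{th:hardness-CFX}. By appending a single output layer that flips the decision (replacing $\model{\theta}(\cdot)$ with $1 - \model{\theta}(\cdot)$), one can arrange that a realization produces output below the threshold precisely when its network edge main inputs encode a valid discrete assignment satisfying $\phi$, while the base model still classifies $x'$ positively so that $x'$ remains a genuine CFX. In this encoding the robust realizations are exactly those whose main inputs fail to encode a satisfying assignment, so the count $N$ determines the number of satisfying assignments of $\phi$. Since computing the number of satisfying assignments of a 3-CNF formula ($\#$3-SAT) is $\#$P-complete, and a fortiori NP-hard, the counting problem inherits NP-hardness; this route has the advantage of exhibiting the stronger counting flavour directly.

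The main obstacle is that $\Delta$ is the continuous box $[\theta - \delta, \theta + \delta]$, so ``the number of model shifts'' is not literally a cardinality and must be given a precise meaning — for instance the volume of the robust region $R = \{\theta' \in [\theta - \delta, \theta + \delta] : \model{\theta'}(x') \geq 0.5\}$, or a count over a sufficiently fine, polynomially described discretisation of the weight grid induced by the generating- and discretizer-gadgets. The delicate point is to rule out measure-zero artefacts: the dichotomy ``$N < T$ iff non-robust'' is faithful only if every violating realization is accompanied by a full-dimensional neighbourhood of violating realizations. Because $\theta' \mapsto \model{\theta'}(x')$ is piecewise linear and continuous for ReLU networks, I would argue that whenever a strict violation $\model{\theta'}(x') < 0.5$ occurs at an interior point it persists on an open set, so the violating region has positive measure; combined with the fact that $T$ (the total volume $(2\delta)^k$) is trivially computable, this makes the Turing reduction polynomial and completes the argument.
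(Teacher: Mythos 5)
Your first route is essentially identical to the paper's own proof, which consists of exactly the observation you make: deciding non-$\Delta$-robustness is NP-complete as a consequence of Theorem~\ref{th:hardness-CFX}, and it Turing-reduces to the counting problem via a single oracle call comparing the count of robust realizations against the total, so counting is NP-hard. Your additional material --- the \#3-SAT route yielding \#P-hardness, and the measure-theoretic reading of ``number of model shifts'' over the continuous box $[\theta-\delta,\theta+\delta]$ together with the argument that any violation persists on a set of positive measure --- goes beyond the paper, which leaves these points entirely implicit, but both additions are sound and the latter addresses a genuine imprecision in how the statement is phrased.
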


Indeed, deciding non-$\Delta$-robustness and $\Delta$-robustness are equivalent under Turing reductions. Therefore, the counting problem for $\Delta$-robustness is at least as hard as the decision problem for non-$\Delta$-robustness. This hardness result motivates an approximate approach to estimate the robustness of a counterfactual under a set of PMS $\Delta$. 
   

\section{Probabilistic Guarantees for Existing Notions of Model Shifts} \label{sec:compare}

As we have established in the previous section, exact methods for computing robustness under model shifts are bound to lack scalability. This motivates the design of probabilistic approaches to solve the problem. Previous work by~\citet{Hammanetal23} presented an approach to obtain counterfactual explanations that are probabilistically robust under NOMS. A natural question that arises then is whether guarantees obtained for NOMS also transfer to the PMS setting. As we show for the first time below, this is not the case in general.

\begin{lemma}
\label{lem:diff}
    Naturally-occurring model shifts may not be Plausible, and vice-versa.
\end{lemma}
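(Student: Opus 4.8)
The plan is to establish that the two notions are \emph{incomparable} by exhibiting one model shift that is naturally occurring but not plausible, and another that is plausible but not naturally occurring. The key observation driving both constructions is that Definitions~\ref{def:NOMC} and~\ref{def:deltachange} constrain entirely different objects: NOMS bounds the \emph{output distribution} of the shifted classifier (its mean, variance, and Lipschitz constant), whereas PMS bounds the \emph{parameter-space distance} $\distance{\theta}{\theta'}{p} = \dist{p}{\model{\theta}}{\model{\theta'}}$. Since a network can undergo arbitrarily large parameter changes while leaving its input--output map unchanged, and conversely can undergo arbitrarily small parameter changes that nonetheless perturb the output, neither bound implies the other.

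For the direction ``NOMS $\not\Rightarrow$ PMS'', I would exploit the functional redundancy of ReLU networks. Concretely, pick any hidden ReLU unit with pre-activation $z = w^\top x + b$ and apply the positive-scaling symmetry: multiply its incoming weights and bias by a constant $c > 0$ and its outgoing weight $v$ by $1/c$. Because $\relu(c\,z) = c\,\relu(z)$ for $c > 0$, the compensated contribution $\tfrac{v}{c}\relu(c\,z) = v\,\relu(z)$ is unchanged, so the resulting classifier $\model{\theta'}$ computes \emph{exactly the same function} as $\model{\theta}$, i.e.\ $\model{\theta'}(x) = \model{\theta}(x)$ for every $x$. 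Treating this as a degenerate (point-mass) shift, all three NOMS conditions hold trivially: the output is unbiased, $\mathbb{E}[\model{\theta'}(x)] = \model{\theta}(x)$; its variance is zero, hence $\le \nu$; and the Lipschitz constant is unchanged. Yet $\distance{\theta}{\theta'}{p}$ grows without bound as $c \to \infty$, so for any fixed $\delta$ the shift violates $\dist{p}{\model{\theta}}{\model{\theta'}} \le \delta$ and is not plausible. A ``dead-neuron'' construction, where a unit's pre-activation is negative on all of $\mathcal{X}$ and its outgoing weight may therefore be changed arbitrarily, gives an alternative witness.

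For ``PMS $\not\Rightarrow$ NOMS'', I would take essentially any small deterministic perturbation. Choose $\theta' = \theta + \varepsilon e_i$ perturbing a single weight with $\distance{\theta}{\theta'}{p} = |\varepsilon| \le \delta$, so the shift is plausible; for a generic network this changes the prediction at some input, $\model{\theta'}(x_0) \neq \model{\theta}(x_0)$. As a point-mass shift its variance is zero, but its output is \emph{biased}: $\mathbb{E}[\model{\theta'}(x_0)] = \model{\theta'}(x_0) \neq \model{\theta}(x_0)$, violating the unbiasedness requirement of Definition~\ref{def:NOMC}, so it is not naturally occurring. I would also note that this failure is not an artifact of the point mass: because $\model{\theta}(x)$ is a \emph{nonlinear} function of $\theta$, even a symmetric mean-zero parameter distribution over the $\ell_p$-ball $\Delta_\delta$ generally yields $\mathbb{E}[\model{\theta'}(x)] \neq \model{\theta}(x)$, so no bounded-parameter family automatically inherits the NOMS output-unbiasedness.

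The main obstacle I anticipate is handling the \emph{random-shift} framing of Definition~\ref{def:NOMC} cleanly, since its three conditions are stated over a distribution on $\Delta$. The trick that avoids the probabilistic machinery is to use degenerate (deterministic) shifts: when the function is preserved exactly, unbiasedness and zero variance are immediate, and when the function changes, the zero-variance/nonzero-bias split pinpoints precisely which NOMS condition fails. Verifying the Lipschitz clause in the first direction is then routine, since both classifiers realize the same map and hence share the same Lipschitz constant.
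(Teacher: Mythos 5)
Your proof is correct, but it takes a genuinely different route from the paper's in both directions. For the ``plausible but not naturally occurring'' half, the paper exhibits a concrete two-hidden-unit ReLU network, a counterfactual $x'$, and the \emph{entire} PMS ball $\Delta_{0.3}$ equipped with the uniform distribution, then shows $\mathbb{E}[\model{\theta'}(x')] > \model{\theta}(x') $ via a Jensen-type bias induced by the ReLU nonlinearity; you instead take a single deterministic (point-mass) shift $\theta' = \theta + \varepsilon e_i$ inside the ball and note that unbiasedness fails outright. Your version is more elementary and avoids computing any expectation, but it leans on the phrase ``for a generic network this changes the prediction at some input''; to be fully rigorous you should exhibit one concrete witness (a single-neuron network already works), which is exactly what the paper's explicit example supplies. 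The paper's choice of the uniform distribution over the whole ball also carries extra practical content --- it shows that the canonical randomization over PMS is itself biased, which is the fact relevant to transferring probabilistic guarantees --- whereas your degenerate distribution only shows the definitions are logically incomparable, which is all the lemma literally asserts. For the ``naturally occurring but not plausible'' half, the paper (in the appendix) uses the one-parameter network $\relu(0.5 - \relu(x-\theta))$ with $\theta = \max_{x \in \mathcal{X}} x$ and shifts $\theta \mapsto \theta + \tau$, which preserve outputs only on the data manifold; your positive-scaling symmetry ($w \mapsto cw$, $v \mapsto v/c$) preserves the function \emph{everywhere}, which is arguably cleaner since it needs no reference to $\mathcal{X}$, and your ``dead-neuron'' alternative is essentially the paper's construction. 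Both arguments rest on the same core observation you state up front: NOMS constrains the input--output behavior while PMS constrains parameter distance, and neither bound implies the other.
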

\begin{proof}
     Consider the DNN $\model{\theta}$ depicted in Fig.~\ref{fig:lemma} (a) with two input nodes, one hidden layer with two ReLU nodes\footnote{In this proof, we consider a DNN with only ReLU activation functions. However, we notice that it is possible to have a similar counterexample even with other activations, e.g. Tanh, Sigmoid.} and one single output. The parameters $\theta = [w_1, \dots, w_6]$ are the weights on the edges listed 
     top-bottom and left-right.

    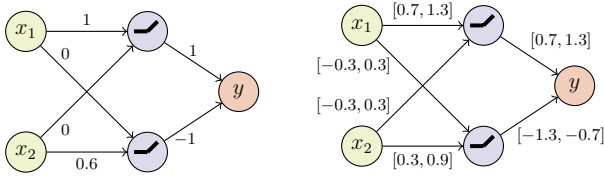
\begin{figure}[h!]
        
        \centering
        \begin{subfigure}[t]{0.5\columnwidth}
        \centering
        \begin{tikzpicture}[scale=0.8, every node/.style={scale=0.7}]

  
  \node[circle,draw=black, minimum width=0.75cm,fill=maygreen25] (input_1) at (0,0) {\Large$x_1$};
  \node[circle,draw=black, minimum width=0.75cm,fill=maygreen25] (input_2) at (0,-2) {\Large$x_2$};
  
  \node[circle,draw=black, minimum width=0.75cm,fill=lila25] (hidden_1) at (2,0) {};
  \begin{scope}[xshift=2cm,scale=0.7]
        \relua
    \end{scope}
  
  \node[circle,draw=black, minimum width=0.75cm,fill=lila25] (hidden_2) at (2,-2) {};
  \begin{scope}[xshift=2cm, yshift=-2cm,scale=0.7]
        \relua
    \end{scope}
  
    \node[circle,draw=black, minimum width=0.75cm,fill=red25] (output_1) at (3.5,-1) 
  {\Large$y$};

  \draw[->] (input_1) edge node[above]{{$1$}} (hidden_1);
  \draw[->] (input_1) edge node[below, xshift=-0.4cm, yshift=-0.5cm]{$0$} 
  (hidden_2);
  
  \draw[->] (input_2) edge node[above, xshift=-0.4cm, yshift=0.5cm]{$0$} 
  (hidden_1);
  \draw[->] (input_2) edge node[below]{$0.6$} 
  (hidden_2);
  
  \draw[->] (hidden_1) edge node[above]{{$1$}} (output_1);

  \draw[->] (hidden_2) edge node[below, xshift=-0.15cm, yshift=-0.1cm]{$-1$}  (output_1); 
  
\end{tikzpicture}
   
        \end{subfigure}%
        \begin{subfigure}[t]{0.5\columnwidth}
        \centering
        \begin{tikzpicture}[scale=0.8, every node/.style={scale=0.7}]

  
  \node[circle,draw=black, minimum width=0.75cm,fill=maygreen25] (input_1) at (0,0) {\Large$x_1$};
  \node[circle,draw=black, minimum width=0.75cm,fill=maygreen25] (input_2) at (0,-2) {\Large$x_2$};
  
  \node[circle,draw=black, minimum width=0.75cm,fill=lila25] (hidden_1) at (2,0) {};
  \begin{scope}[xshift=2cm,scale=0.7]
        \relua
    \end{scope}
  
  \node[circle,draw=black, minimum width=0.75cm,fill=lila25] (hidden_2) at (2,-2) {};
  \begin{scope}[xshift=2cm, yshift=-2cm,scale=0.7]
        \relua
    \end{scope}
  
    \node[circle,draw=black, minimum width=0.75cm,fill=red25] (output_1) at (3.5,-1) 
  {\Large$y$};

  \draw[->] (input_1) edge node[above]{\small{$[0.7,1.3]$}} (hidden_1);
  \draw[->] (input_1) edge node[below, xshift=-1.3cm, yshift=-0.1cm]{\small$[-0.3,0.3]$} 
  (hidden_2);
  
  \draw[->] (input_2) edge node[above, xshift=-1.3cm, yshift=0.1cm]{\small$[-0.3,0.3]$} 
  (hidden_1);
  \draw[->] (input_2) edge node[below]{\small$[0.3,0.9]$} 
  (hidden_2);
  
  \draw[->] (hidden_1) edge node[above,xshift=0.6cm]{{\small$[0.7,1.3]$}} (output_1);

  \draw[->] (hidden_2) edge node[below, xshift=0.6cm, yshift=-0.1cm]{\small$[-1.3,-0.7]$}  (output_1); 
  
\end{tikzpicture}
  
        \end{subfigure}%
         \vspace{3mm}
        \caption{(a) The model $\model{\theta}$ used as an example to prove the lemma. (b) An interval neural network representing the realizations that can be obtained from $\model{\theta}$ considering a set of PMS $\Delta_{\delta}$ with $\delta = 0.3$.}
        \label{fig:lemma}
         \vspace{5mm}
    \end{figure}

Propagating an input vector $x = [x_1, x_2]^T$ through $\model{\theta}$, we obtain $\model{\theta}(x) = y = w_5 \cdot \max\{0, w_1\cdot x_1 + x_2\cdot w_3\} + w_6 \cdot \max\{0, w_2\cdot x_1 + x_2 \cdot w_4\}$. Now assume an input vector $x = [0.9, 0.9]^T$ and weights $w_1=1, w_2=0,w_3=0,w_4=0.6,w_5=1, w_6=-1$. The corresponding output generated by the DNN is $\model{\theta}(x) = 0.46$.
A counterfactual for $x$ could be given as a new input vector $x' = [1, 0.8]^T$, for which we obtain $\model{\theta}(x')=0.52 > 0.5$. Now, following Definition~\ref{def:deltachange}, we consider a set of plausible model changes obtained for $\delta = 0.3$. This can be captured by defining on each weight $w_i$ the corresponding interval in $[w_i - \delta, w_i+\delta]$
depicted in Fig.~\ref{fig:lemma} (b) that represents the set of all the possible models obtained from $\model{\theta}$, replacing each $w_i$ with a weight in the interval $[w_i - \delta, w_i+\delta]$. We then have that, the expected result of a model $\model{\theta'}$ sampled uniformly from such a set satisfies:
\begin{align*}
    \mathbb{E}[\model{\theta'}(x')] =~&  \mathbb{E}[w_5]\cdot \mathbb{E}[\relu(x_1\cdot w_1 + x_2\cdot w_3)] ~+\\ &\mathbb{E}[w_6]\cdot \mathbb{E}[\relu(x_1\cdot w_2 + x_2\cdot w_4)] \\
    =~ & \mathbb{E}[[0.7,1.3]] \cdot \mathbb{E}[\max\{0, x_1\cdot[0.7, 1.3] ~+ \\ &x_2\cdot[-0.3, 0.3]\}] + \mathbb{E}[[-1.7,-0.3]] \cdot \\
    & \mathbb{E}[\max\{0, x_1\cdot[-0.3, 0.3] + x_2\cdot[0.3, 0.9]\}]\\
    & >  0.52 \neq \model{\theta}(x')
\end{align*}

Definition~\ref{def:NOMC} states that a model shift is naturally occurring if $\mathbb{E}[\model{\theta'}(x)]=\model{\theta}(x)$. This implies that ${\Delta}$ contains models that cannot be characterized as naturally occurring model changes. Vice versa, the existence of Naturally occurring model shifts not being plausible is implicit in the definition (we defer an example to the supplementary material) thus giving our result. 


\end{proof}

Lemma~\ref{lem:diff} shows the existence of witnesses proving that Definition~\ref{def:NOMC} (NOMS) and Definition~\ref{def:deltachange} (PMS) may capture very different model changes in general. To complement this observation, we also ran experiments to determine how often these definitions disagree empirically. Our results, reported in the Appendix, confirm that the two notions indeed capture two different settings in general. These results show that (probabilistic) methods devised for NOMS may fail to guarantee robustness under PMS, thus motivating the development of dedicated approaches for probabilistic guarantees under PMS. Indeed, having clarified the relationship between the two notions of model shifts, in the following, we focus on certification approaches for robustness under PMS, presenting a novel approximate solution with probabilistic guarantees.

\section{Robustness under PMS with Probabilistic Guarantees} \label{sec:main_results}

\citet{Jiang_Leofante_Rago_Toni_2023} proposed to use INNs to enable a compact representation of a superset of the models that can be obtained by a perturbation of the starting model under a set $\Delta$. By exploiting an exact reachable set computation method, e.g., based on MILP \cite{MILP}, the authors could determine whether or not a CFX is robust under the chosen $\Delta$ via a single forward propagation of the CFX. However, in view of the NP-hardness of the problem discussed in the § \ref{sec:hardness} and the typical non-linear nature of the classifiers, it presents some computational limitations. 

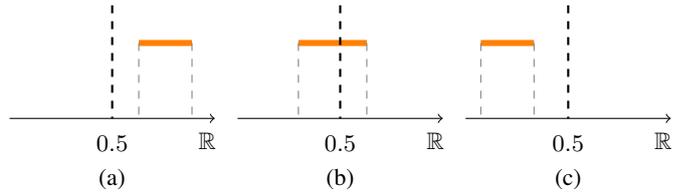
\begin{figure}[t]
    \centering
    \scalebox{1}{\begin{tikzpicture}

 
  
 

  

  

\draw[line width=0.8mm, color=orange] (4,0) -- (4.7,0);
\draw[dashed,gray] (4,0) -- (4,-1);
\draw[dashed,gray] (4.7,0) -- (4.7,-1);

\draw[dashed,black, thick] (3.65,0.5) edge node[above,yshift=-1.3cm]{\color{black}$0.5$} (3.65,-1);

\draw[->] (2.3,-1) -- (5,-1);
\node[] (phantom_r1) at (4.9,-1.3) {$\R$};

\node[] (phantom_1) at (3.65,-1.8) {(a)};


\draw[line width=0.8mm, color=orange] (6.1,0) -- (7.0,0);
\draw[dashed,gray] (6.1,0) -- (6.1,-1);
\draw[dashed,gray] (7.,0) -- (7.,-1);

\draw[dashed,black, thick] (6.65,0.5) edge node[above,yshift=-1.3cm]{\color{black}$0.5$} (6.65,-1);

\draw[->] (5.3,-1) -- (8,-1);
\node[] (phantom_r2) at (7.9,-1.3) {$\R$};

\node[] (phantom_1) at (6.65,-1.8) {(b)};


\draw[dashed,black, thick] (9.65,0.5) edge node[above,yshift=-1.3cm]{\color{black}$0.5$} (9.65,-1);

\draw[line width=0.8mm,color=orange] (8.5,0) -- (9.2,0);
\draw[dashed,gray] (8.5,0) -- (8.5,-1);
\draw[dashed,gray] (9.2,0) -- (9.2,-1);

\draw[->] (8.3,-1) -- (11,-1);
\node[] (phantom_r3) at (10.9,-1.3) {$\R$};

\node[] (phantom_1) at (9.65,-1.8) {(c)};

\end{tikzpicture}}
     \vspace{3mm}
    \caption{Visual representation of the possible output reachable set for an interval abstraction for a binary classification model. (a) For a given ${\Delta}$, we classify an input as $1$ (robust) if the output range for that input is always greater $0.5$. Otherwise, the input is classified as $0$, i.e., not robust (b),(c).
    }
    \label{fig:inn_classfication}
     \vspace{5mm}
\end{figure}

In general, interval neural networks map inputs to intervals representing an over-approximation of all possible outcomes that can be produced by any shifted model $\model{\theta'}$ obtained under $\Delta$. Given this property, if 
the output reachable set is completely disjoint from the decision threshold $0.5,$ then one can assert -- in a sound and complete fashion -- whether or not a given CFX is robust (Fig.~\ref{fig:inn_classfication} (a,c)). On the other hand, if we run into a situation such as the one depicted in Fig.~\ref{fig:inn_classfication} (b), one cannot assert robustness with certainty. In this scenario, \citet{Jiang_Leofante_Rago_Toni_2023} propose to classify the CFX as not robust, which preserves the soundness of their result.  Nonetheless, this might lead to discarding a CFX even when the actual density of (equivalently, the probability that after retraining, we incur in) plausible model shifts for which the CFX is not robust is extremely low. As we will show in §~\ref{sec:experiments}, this worst-case notion of robustness affects the CFXs generated by~\cite{Jiang_Leofante_Rago_Toni_2023}, which may end up being unnecessarily expensive (in terms of proximity) and having low plausibility. Additionally, computing the exact output reachable set of an interval abstraction may be costly (e.g., MILP is known to be NP-hard). This is expected: Theorems \ref{th:hardness-CFX} and \ref{thm:density} show that there is no polynomial time algorithm able to return an exact estimate of the fraction of plausible shifts for which the  CFX is robust (hence a fortiori deciding whether it is $\Delta$-robust), unless P=NP. In the following, we propose a novel certification approach that aims to alleviate this problem.

\subsection{A Provable Probabilistic Approach}
\label{ssec:prob_approach}


One possible idea to avoid exact reachable set computation to determine the robustness of a CFX under PMS is to use naive interval propagation. Given an input CFX, we propagate this input through the network, keeping track of all the possible activation values that can be obtained under $\Delta$ until the output layer is reached. However, the non-linear and non-convex nature of DNNs may result in a significant overestimation of the actual reachable set, thus resulting in a spurious decision of non-robustness. In such cases, a CFX may end up being labeled as non-robust even though the CFX is actually robust. Additionally, even with exact methods, a CFX may be discarded even though the fraction of plausible model shifts in $\Delta$ for which the CFX is not robust is negligible.

To avoid these problems, we propose an approximate certification approach based on Monte-Carlo sampling that draws sample realizations directly from $\Delta$ to obtain an underestimation of the space of possible classifications under PMS. The idea of using a sample-based approach stems from the fact that the $\Delta$ set, representing all the plausible model shifts, abstracts an infinite number of models to test. As testing this infinite number of models may be impossible in practice, efficient sampling-based solutions hold great promise. In detail, given a CFX $x'$ we can compute an underestimation of the output reachable set under $\Delta$ by sampling $n$ random realizations $\model{\theta_1}, \dots, \model{\theta_n}$ from ${\Delta},$ and compute the output reachable set by taking, respectively, the $\min_i \model{\theta_i}(x')$ and the $ \max_i \model{\theta_i}(x')$ for $i \in \{1,\ldots,n\}$. 

This approach is very effective and allows us to obtain an estimate of the output reachable set without using an exact solver. Nonetheless, the number $n$ of realization to sample in order to achieve a good reachable set estimation remains unclear, as well as what kind of guarantees one could obtain from this approach. To answer these questions, we leverage previous results on the \textit{statistical prediction of tolerance limits} \cite{wilks1942statistical,porter2019wilks,eProve}. Indeed, we observe that for each realization $\model{\theta_i}$ sampled from $\Delta$, the resulting output of the DNN $\mathcal{M}_{\theta_i}(x')$ can be interpreted as an instantiation of a random variable $X$ whose tolerance interval we are trying to estimate. Following this observation, we can derive a probabilistic bound on the correctness of the solution returned from $n$ samples, using the following lemma based on \cite{wilks1942statistical}:

\begin{lemma}\label{lemma:wilks}
    Fix an integer $n>0$ and an approximation parameter $R \in (0,1)$. Given a sample of $n$  models $\model{\theta_1}, \dots \model{\theta_n}$ from the (continuous) set of possible realizations $\Delta$, the probability that for at least a fraction $R$ of the models in a further possibly infinite sequence of samples $\model{\theta_1}^{(2)}, \dots \model{\theta_m}^{(2)}$ from $\Delta$  we have 
  \begin{equation} \label{condition:min}
    \min_i \model{\theta_i}^{(2)} (x) \geq \min_i \model{\theta_i}(x) 
    \end{equation}
  $$ \mbox{(respectively } \max_i \model{\theta_i}^{(2)} (x) \leq \max_i \model{\theta_i}(x) \mbox{)}$$
   is given by $\alpha = n \cdot \int_R^1 x^{n-1}\;dx = 1 - R^n$.
\end{lemma}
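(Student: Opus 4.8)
The plan is to recognize the claim as an instance of Wilks' distribution-free tolerance-limit result and to prove it by reducing the extremal statistic $\min_i \model{\theta_i}(x)$ to an order statistic of a uniform sample via the probability integral transform. Concretely, I would fix the input $x$ and regard each sampled realization's output $X_i := \model{\theta_i}(x)$ as an i.i.d.\ draw from the one-dimensional push-forward of the (continuous) sampling measure on $\Delta$ under the fixed measurable map $\theta \mapsto \model{\theta}(x)$. Let $F$ denote the common CDF of the $X_i$ and write $X_{(1)} := \min_i \model{\theta_i}(x)$ for the sample minimum. The whole argument is distribution-free: it will use nothing about $F$ beyond its continuity.

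The first step is to make precise what \emph{fraction} of the further, possibly infinite, sequence $\model{\theta_1}^{(2)},\model{\theta_2}^{(2)},\dots$ meets the condition. Conditioned on the first sample (hence on $X_{(1)}$), each future draw independently satisfies $\model{\theta_j}^{(2)}(x) \geq X_{(1)}$ with probability $1 - F(X_{(1)})$, since $\mathbf{P}(Y \geq t) = 1 - F(t)$ for a fresh draw $Y$ when $F$ is continuous. By the strong law of large numbers, the limiting relative frequency of future draws meeting the condition equals $1 - F(X_{(1)})$ almost surely. Hence the event ``at least a fraction $R$ of future draws satisfy \eqref{condition:min}'' coincides, up to a null set, with $\{\, 1 - F(X_{(1)}) \geq R \,\}$, and it remains to compute its probability over the randomness of the first $n$ samples.

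The second step is the transform and the evaluation. Because $F$ is continuous, the variables $U_i := F(X_i)$ are i.i.d.\ $\mathrm{Uniform}[0,1]$, and since $F$ is monotone we have $F(X_{(1)}) = \min_i U_i =: V$. The minimum of $n$ uniforms has density $f_V(v) = n(1-v)^{n-1}$ on $[0,1]$, so
\begin{align*}
\alpha &= \mathbf{P}\big(1 - V \geq R\big) = \mathbf{P}\big(V \leq 1 - R\big) \\
 &= \int_0^{1-R} n(1-v)^{n-1}\,dv = n\int_R^1 x^{n-1}\,dx = 1 - R^n,
\end{align*}
where the penultimate equality uses the substitution $x = 1-v$. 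This is exactly the claimed value. The parenthetical statement for $\max_i \model{\theta_i}(x)$ follows by the identical argument applied to $-X_i$ (equivalently, by working with the upper-tail content $1 - \big(1 - \max_i U_i\big)$ of the sample maximum), yielding the same $1 - R^n$ by symmetry.

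The step I expect to require the most care is the continuity hypothesis, which is doing genuine work here rather than serving as mere convenience. The identity $U_i = F(X_i) \sim \mathrm{Uniform}[0,1]$ and the absence of ties in the order statistics both fail if $F$ has atoms: for a piecewise-linear network the output $\model{\theta}(x)$ could in principle be constant on a set of weights of positive measure (e.g.\ a region where a ReLU pre-activation stays negative), creating a point mass. In that case $F(X_i)$ is only stochastically larger than uniform, the equality in the display degrades to an inequality, and the clean value $1 - R^n$ is no longer exact. This is precisely why the lemma restricts to a \emph{continuous} set of realizations $\Delta$; I would either invoke this assumption directly or note that for generically chosen network parameters the map $\theta \mapsto \model{\theta}(x)$ is non-constant on each linear region, so the induced output distribution is atomless and the transform applies verbatim.
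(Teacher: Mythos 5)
Your proof is correct. A point of comparison worth noting: the paper does not actually prove Lemma~\ref{lemma:wilks} at all --- it invokes it as a known consequence of Wilks' classical result on distribution-free tolerance limits \cite{wilks1942statistical}, so your derivation is not so much a different route as the missing route. What you supply is the standard proof of Wilks' theorem, and it buys three things the bare citation does not. First, it makes the statement self-contained: the probability integral transform $U_i = F(X_i)$, the identity $F(X_{(1)}) = \min_i U_i$, and the density $n(1-v)^{n-1}$ of the uniform minimum give exactly $n\int_R^1 x^{n-1}\,dx = 1 - R^n$. Second, your SLLN step makes precise the otherwise ambiguous phrase ``at least a fraction $R$ of a possibly infinite sequence'': read literally as $\min_j \model{\theta_j}^{(2)}(x) \geq \min_i \model{\theta_i}(x)$ over an infinite future sample, the event would have probability zero for an atomless distribution, so identifying the event (up to a null set) with $\{1 - F(X_{(1)}) \geq R\}$ is a genuine and necessary clarification, not pedantry. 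Third, your caveat about atoms is a real one that the paper glosses over: the lemma's hypothesis that $\Delta$ is a ``continuous'' set of realizations does not by itself imply the push-forward of $\theta \mapsto \model{\theta}(x)$ is atomless (a ReLU network can be constant in $\theta$ on a positive-measure region, e.g.\ when all relevant pre-activations are dead), and with atoms the exact value $1 - R^n$ degrades to the one-sided bound $\alpha \geq 1 - R^n$ --- still sufficient for the certification use made of the lemma in the \ourmethod algorithm, but no longer an equality. The only directions I would tighten: state explicitly that the i.i.d.\ structure of the future draws, conditional on the first sample, is what licenses the Bernoulli/SLLN step, and note the direction of the inequality in the atomic case (it is conservative, i.e.\ favorable).
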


Informally, Lemma~\ref{lemma:wilks} allows us to derive the minimum number of realizations $n$ needed to guarantee that at least $R$ models within $\Delta$ satisfy the robustness property with probability $\alpha$.  Therefore, using these $n$ realizations, we can obtain an underestimation of the reachable set that is correct with confidence $\alpha$ for at least a fraction $R$ of indefinitely large further realizations of models from $\Delta$. 
In practice, if we set, e.g. $\alpha=0.999$ and $R=0.995$, we can derive $n$ as $n = \log_R (1-\alpha) = 1378$. After having selected $1378$ random realizations from $\Delta$, if the lower bound of the underestimated reachable set computed as $\min_i \model{\theta_i}(x')$ is greater than 0.5, then with probability $\alpha=0.999$, $R$ is a lower bound on the fraction of plausible model shifts in $\Delta$ for which $x'$ is robust. In other words, Lemma~\ref{lemma:wilks} allows us to assert with a confidence $\alpha$ that $x'$ is not $\Delta$-robust for at most a fraction $(1-R)=0.05$ of models from $\Delta$.

\subsection{The \ourmethod Algorithm}

Using the result of Lemma~\ref{lemma:wilks}, we now present our approximation method \ourmethod to generate probabilistic robustness guarantees. The procedure, shown in Algorithm~\ref{alg:new_approach}, receives as input a model $\model{\theta}$, a CFX $x'$ for which robustness guarantees are sought, 
and the two confidence parameters $\alpha, R$. The algorithm then searches for the 
largest $\delta_{max}$ such that the CFX $x'$ is robust for at least a fraction $R$ of the plausible model shifts up to $\delta_{max}$ with probability $\alpha$.

 \begin{algorithm}[t]
\caption{Approximate Plausible $\Delta$-Shift (\texttt{AP$\Delta$S})}\label{alg:new_approach}
\begin{algorithmic}[1]
\small
\STATE \textbf{Input:} Model $\model{\theta}$, set of PMS $\Delta$, CFX $x'$, $\alpha$, $R$ 
\STATE \textbf{Output: } $\delta_{max}$
\vspace{0.2cm}

\STATE $n \gets \log_R (1-\alpha)$ \hfill $\rhd$ number of samples \label{numreal}
\STATE $\delta_{init} \gets  0.0001$ 
\STATE rate $\gets \texttt{realizations}(\model{\theta}, x', \delta_{init}, n)$
 \IF{rate $\neq 1$}
    \STATE \textbf{return} $0$ \hfill $\rhd$ not robust for $\delta_{init}$
\ENDIF
\vspace{0.2cm}
\STATE $\delta \gets \delta_{init}$
\WHILE{rate $= 1$}
    \STATE $\delta \gets 2\delta$
    \STATE rate $\gets \texttt{realizations}(\model{\theta},  x', \delta, n)$
\ENDWHILE

\vspace{0.2cm}
$\rhd$ we exit from the while because we have found at least one model in the realizations
 with an output $< 0.5$, and we have $[\delta/2,\; \delta)$ to search for a $\delta_{max}$.
\STATE $\delta_{max} \gets \delta/2$

\WHILE{True}
    \IF{$\vert \delta - \delta_{max}\vert \leq \delta_{init}$}
        \STATE \textbf{return} $\delta_{max}$
    \ENDIF
    \STATE $\delta_{new} \gets (\delta_{max} + \delta)/2$
    \STATE rate $\gets \texttt{realizations}(\model{\theta}, x', \delta_{new}, n)$
    \IF{rate $= 1$}
        \STATE $\delta_{max} \gets \delta_{new}$
    \ELSE
        \STATE $\delta \gets \delta_{new}$
    \ENDIF
\ENDWHILE
\end{algorithmic}
\end{algorithm}
 
The algorithm starts by computing the size $n$ of a sample of realizations that is sufficient to guarantee the condition in Lemma~\ref{lemma:wilks}~(line~\ref{numreal}). \ourmethod then initializes a small $\delta_{init}$ and checks if $x'$ is at least robust to a small model shift. To this end, it employs $\texttt{realizations}(\model{\theta}, x', \delta, n)$ which samples $n$ realizations, pertubating each model parameter by at most a factor $\delta$ and checks if for each of these realization $\model{\theta_i}(x') \geq 0.5$, thus computing a robustness rate. 
If not all these realizations result in a robust outcome, thus achieving a final rate not equal to 1, the algorithm discards the CFX $x'$ as non-robust (lines 6-8). Otherwise, it combines an exponential search (lines 9-12) and a binary search (lines 13-24) to find $\delta_{\max}$. At each step of this search, the procedure checks whether for each of the $n$ realizations from $\Delta = \{ \mshift \mid \dist{p}{\model{\theta}}{\mshift(\model{\theta})} \leq \delta_{max}\}$ holds $\model{\theta_i}(x') \geq 0.5$. 

\begin{proposition} \label{th2_eprove}
Given a model $\model{\theta}$ and a CFX $x'$, let $\delta^*$ be the (exact) maximum magnitude of model shifts such that 
$x'$ is robust with respect to the set of PMS $\Delta_{\delta^*}.$ Then, with probability $\alpha,$ \ourmethod returns a $\delta_{\max} \geq \delta^*$ such that the CFX $x'$ is robust for at least a fraction $R$ of the set of PMS $\Delta_{\delta_{\max}}.$ Moreover, the computation of $\delta_{max}$ is polynomial.
\end{proposition}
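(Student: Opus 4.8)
The plan is to decompose the statement into three independent claims and prove them in order: (i) the inequality $\delta_{\max}\ge\delta^{*}$, (ii) the probabilistic fraction-$R$ robustness guarantee at the returned $\delta_{\max}$, and (iii) the polynomial running time. The engine for (i) and (ii) is one deterministic observation: whenever $\delta\le\delta^{*}$, the very definition of $\delta^{*}$ forces \emph{every} realization in $\Delta_{\delta}$ to satisfy $\model{\theta'}(x')\ge 0.5$, so in particular the $n$ realizations drawn inside any call $\texttt{realizations}(\model{\theta},x',\delta,n)$ are all robust and the returned rate equals $1$ with probability $1$. Hence a rate strictly smaller than $1$ can only ever be observed for $\delta>\delta^{*}$.

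For (i) I would track the ``non-robust'' upper endpoint maintained by the exponential and binary searches (the variable $\delta$ in Algorithm~\ref{alg:new_approach}). It is updated only when a rate $\neq 1$ is seen, which by the observation above cannot occur at any value $\le\delta^{*}$; therefore this endpoint stays strictly above $\delta^{*}$ throughout the run. Since the binary search terminates once the two endpoints are within $\delta_{init}$ of each other, the returned value obeys $\delta_{\max}\ge\delta-\delta_{init}>\delta^{*}-\delta_{init}$, which gives $\delta_{\max}\ge\delta^{*}$ up to the search resolution $\delta_{init}$ (taken arbitrarily small). This part is deterministic, i.e.\ holds with probability $1$.

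For (ii) I would use that $\delta_{\max}$ is accepted precisely because the accepting call $\texttt{realizations}(\model{\theta},x',\delta_{\max},n)$ returned rate $=1$, i.e.\ $\min_i \model{\theta_i}(x')\ge 0.5$ for its $n$ fresh i.i.d.\ realizations from $\Delta_{\delta_{\max}}$. Reading each $\model{\theta_i}(x')$ as a draw of the random variable whose lower tolerance limit we estimate, Lemma~\ref{lemma:wilks} yields that, with probability $\alpha=1-R^{\,n}$, at least a fraction $R$ of the realizations in $\Delta_{\delta_{\max}}$ produce an output $\ge \min_i \model{\theta_i}(x')\ge 0.5$ and are thus robust. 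Combined with (i), this gives the stated conclusion: with probability $\alpha$, \ourmethod returns $\delta_{\max}\ge\delta^{*}$ for which at least a fraction $R$ of $\Delta_{\delta_{\max}}$ is robust. For (iii) I would note that $n=\log_R(1-\alpha)$ is a constant independent of the size of $\model{\theta}$, so each invocation of $\texttt{realizations}$ costs $n$ forward propagations of $x'$, i.e.\ time polynomial in the network size; the exponential phase performs $O(\log(\delta_{\mathrm{exit}}/\delta_{init}))$ doublings and the binary search halves an interval of length $<\delta_{\mathrm{exit}}$ down to resolution $\delta_{init}$ in $O(\log(\delta_{\mathrm{exit}}/\delta_{init}))$ steps, both logarithmic in the search range and hence polynomial overall.

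I expect the main obstacle to be the rigorous justification of the probabilistic claim in (ii), because $\delta_{\max}$ is chosen \emph{adaptively} from the outcomes of many independent $\texttt{realizations}$ calls, so one must argue that the Wilks confidence $\alpha$ remains valid for the random population $\Delta_{\delta_{\max}}$ actually returned. The clean resolution is to apply Lemma~\ref{lemma:wilks} to the fixed, fresh sample used in the accepting call and to exploit that the lemma's failure bound $R^{\,n}$ is uniform over the choice of $\delta$, so it transfers to the selected endpoint; the monotone structure established in (i) (rate $=1$ is certain for all $\delta\le\delta^{*}$) prevents the conditioning from inflating the failure probability. A secondary, minor point is handling the $\delta_{init}$-gap in (i) and confirming termination of the exponential phase, which holds whenever $\delta^{*}<\infty$ so that for large enough $\delta$ the set $\Delta_{\delta}$ contains a non-negligible mass of non-robust realizations and the sampled rate drops below $1$ almost surely.
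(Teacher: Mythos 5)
Your proposal is correct and follows essentially the same route as the paper's own proof: the deterministic observation that a rate $\neq 1$ can only be observed for $\delta > \delta^{*}$ (hence $\delta_{\max} \geq \delta^{*}$), Lemma~\ref{lemma:wilks} applied to the accepting sample to get the fraction-$R$ guarantee with confidence $\alpha$, and a count of $n \cdot m$ forward propagations for the polynomial-time claim. You are in fact slightly more careful than the paper, which glosses over both the $\delta_{init}$ resolution gap in your part (i) and the adaptive-selection subtlety you flag in part (ii).
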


\par
\noindent
{\em Proof sketch.}
The $\delta_{\max}$ returned by the algorithm is obtained by iteratively increasing $\delta$, sampling $n$ models from the corresponding $\Delta_{\delta}$ and verifying that $\model{\theta_i}(x) \geq 0.5$ for each model $\model{\theta_i}$ sampled. By definition, $\delta^*$ is the actual value we are trying to estimate. Therefore, for each $\delta \leq \delta^*$, we will always obtain realizations $\model{\theta_i}$ for which $\model{\theta_i}(x') \geq 0.5$. Therefore, \ourmethod will always return $\delta_{\max}$ values that are at least equal to $\delta^*$.
Once the exponential search ends, by exploiting Lemma \ref{lemma:wilks}, we can state that with probability $\alpha$, the CFX $x'$ is robust for at least $R$ of any infinite further realizations from $\Delta_{\delta_{max}}$. The time complexity of the algorithm corresponds to $n \cdot m$ forward propagations, with $n$ being the sample size and $m = \log \frac{\delta_{\max}}{\delta_{init}}$ being the number of iterations of the exponential search, which is polynomial in the input size of the problem. \qed

\section{Experimental Analysis}\label{sec:experiments}

Section~\ref{sec:main_results} laid the theoretical foundations of a novel sampling-based method that allows to obtain provable probabilistic guarantees on the robustness of CFXs. In this section, we evaluate our approach by considering three experiments:
\begin{itemize}
    \item In §~\ref{ssec:soundness} we show how to instantiate \ourmethod in practice using a synthetic example.
    Specifically, we first demonstrate the interplay of parameters $n$, $\alpha$ and $R$ used to obtain a probabilistic guarantee. Then, using the maximum $\delta_{max}$ discovered by \ourmethod, we formally enumerate the number of models inside a set of PMS $\Delta_{\delta_{max}}$ for which a given CFX $x'$ is not robust. We show that this percentage is at most a fraction $(1-R)$, empirically confirming our theoretical results.
    \item In §~\ref{ssec:different_angles} we compare our certification approach with the one proposed in~\cite{Jiang_Leofante_Rago_Toni_2023}. In particular, we focus on the difference between the worst-case guarantees offered by their approach and compare them with the average-case guarantees of \ourmethod in terms of maximum shifts that can be certified. These experiments confirm our intuition that worst-case guarantees might be too conservative in practice, leading to a larger number of CFXs being discarded.
    \item Finally, in §~\ref{ssec:generating}, we consider the problem of generating robust CFXs and compare with two state-of-the-art approaches for robustness under PMS,~\cite{Jiang_Leofante_Rago_Toni_2023} and~\cite{UpadhyayJL21}. We show that our approach produces CFXs that are less expensive (in terms of $\ell_1$ distance) and more plausible, without sacrificing robustness. 
\end{itemize}

\subsection{\ourmethod in Action}
\label{ssec:soundness}

This experiment is designed to demonstrate how the three main parameters of \ourmethod, i.e. $n, \alpha$ and $R$, can be used to obtain probabilistic guarantees of robustness. To this end, we focus on the synthetic example depicted in Fig.~\ref{fig:enum}. Weights for the original network $\model{\theta}$, as well as the input used for testing robustness, are generated randomly.

 \begin{figure}[h!]
        \centering
        \scalebox{1}{\begin{tikzpicture}[scale=0.9, every node/.style={scale=0.8}]

  
  \node[circle,draw=black, minimum width=0.75cm,fill=maygreen25] (input_1) at (0.5,-1) {\Large$x_1$};

  \node[circle,draw=black, minimum width=0.75cm,fill=lila25] (hidden_1) at (2,0) {};
  \begin{scope}[xshift=2cm,scale=0.7]
        \relua
    \end{scope}
  
  \node[circle,draw=black, minimum width=0.75cm,fill=lila25] (hidden_2) at (2,-2) {};
  \begin{scope}[xshift=2cm, yshift=-2cm,scale=0.7]
        \relua
    \end{scope}
  
    \node[circle,draw=black, minimum width=0.75cm,fill=red25] (output_1) at (3.5,-1) 
  {\Large$y$};

  \draw[->] (input_1) edge node[above, xshift=-0.95cm]{\small{$[-0.48, -0.25]$}} (hidden_1);
  \draw[->] (input_1) edge node[below, xshift=-0.9cm, yshift=-0.1cm]{\small$[-0.99, -0.76]$} 
  (hidden_2);

  \draw[->] (hidden_1) edge node[above,xshift=0.95cm]{{\small$[-1.14, -0.91]$}} (output_1);

  \draw[->] (hidden_2) edge node[below, xshift=0.7cm, yshift=-0.1cm]{\small$[0.69, 0.93]$}  (output_1); 
  
\end{tikzpicture}}
        \vspace{3mm}
        \caption{The interval neural network used for exact enumeration.
        }
        \label{fig:enum}
         \vspace{5mm}
    \end{figure}
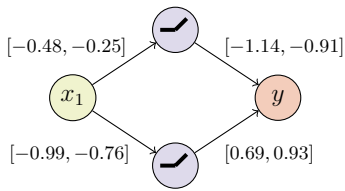
 
Considering a random input $x = -2.57$, we use \ourmethod to estimate a $\delta_{max}$ for which we seek the guarantee that for at least $R=90\%$ of the plausible model shifts induced by such $\delta_{max}$ the CFX $x'$ is robust. 
Following Proposition \ref{th2_eprove}, we set a confidence level $\alpha > 1- 10^{-40}$  (i.e., with certainty, in practice), which yields $n=100k$ realizations. For this setting, \ourmethod identifies a $\delta_{max}=0.115$. 

To validate this result, we define a procedure to exactly enumerate the models within $\Delta_{\delta_{max}}$ for which the robustness property does not hold. The interval abstraction proposed by \cite{Jiang_Leofante_Rago_Toni_2023} can be used to exactly enumerate the portion of the model shifts from $\Delta$ for which a CFX $x'$ is not robust. In fact, it is possible to build an interval neural network using the $\delta_{max}$ value identified by \ourmethod, setting each weight $w_i$ in $\theta$ to $[w_i-\delta_{max}, w_i+\delta_{max}]$. Then, recursively splitting each interval weight of the network in half allows to identify portions of $\Delta$ that are not robust. Employing this exact enumeration strategy (see Algorithm~3 in the Appendix for a complete formalization), after $s=7$ splits, we obtain that for $\sim 92\%$ of sub-interval networks, the CFX is robust. The remaining $8\%$ produced an \textit{unknown} answer (i.e., the situation depicted in Fig. \ref{fig:inn_classfication}(b)) that would require further splits, corresponding to only ten nodes to explore in the next iteration. 
In the worst case, even considering all the remaining ten nodes left to explore as non-robust, we would still obtain a maximum percentage of non-robustness lower than the desired upper bound $(1-R)=10\%$, confirming that the guarantees produced by \ourmethod indeed hold.



\begin{table*}[ht!]
    
    \centering
    
    \caption{Comparison on the robustness of CFXs using five state-of-the-art methods and \ourmethod proposed in this work.}
    \label{tab:results}
    \vspace{5mm}
    \resizebox{2\columnwidth}{!}{
    \begin{tabular}{ccccccccccccccccc}

        \toprule
        &
         \multicolumn{4}{c}{\textit{Diabetes} } &
         \multicolumn{4}{c}{\textit{no2} } &
         \multicolumn{4}{c}{\textit{SBA} } &
         \multicolumn{4}{c}{\!\!\!\! \textit{Credit} } \\
        
        \midrule
        & 
        \textbf{VM1}& 
        \!\!\!\textbf{VM2}\!\!\! & 
        \!\!\!$\ell_1$\!\!\! &
        \textbf{lof} &
        \textbf{VM1}& 
        \!\!\!\textbf{VM2}\!\!\! & 
        \!\!\!$\ell_1$\!\!\! &
        \textbf{lof} &
        \textbf{VM1}& 
        \!\!\!\textbf{VM2}\!\!\! & 
        \!\!\!$\ell_1$\!\!\! &
        \textbf{lof} &
        \textbf{VM1}& 
        \!\!\!\textbf{VM2}\!\!\! & 
        \!\!\!$\ell_1$\!\! &
        \textbf{lof} \\
        & 
        $\delta=0.11$ & 
        \!\!\!$\delta_e=0.27$\!\!\! & 
        \!\!\!\!\!\! &
         &
        $\delta=0.02$& 
        \!\!\!$\delta_e=0.07$\!\!\! & 
        \!\!\!\!\!\! &
         &
        $\delta=0.11$& 
        \!\!\!$\delta=0.25$\!\!\! & 
        \!\!\!\!\!\! &
        &
        $\delta=0.05$& 
        \!\!\!$\delta_e=1.28$\!\!\! & 
        \!\!\!\!\! &
        \\
        
        \midrule
        %
        \!\!\!\!Wacht-R\!\!\!\! & 
        100\% &
        \!\!\!100\%\!\!\! &
        \!\!\!0.122\!\!\! &
        1.00 &
        100\% &
        \!\!\!100\%\!\!\! &
        \!\!\!0.084\!\!\! &
        1.00 &
        92\%&
        \!\!\!92\%\!\!\!&
        \!\!\!0.023\!\!\!&
        -0.78&
        - &
        - &
        - &
        - \\
         \midrule
        %
        %
        Proto-R & 
        100\% &
        \!\!\!96\%\!\!\! &
        \!\!\!0.104\!\!\! &
        1.00 &
        100\%&
        \!\!\!100\%\!\!\! &
        \!\!\!0.069\!\!\! &
        1.00 &
        90\%&
        \!\!\!88\%\!\!\!&
        \!\!\!0.011\!\!\!&
        -0.02 &
        32\%&
        \!\!\!30\%\!\!\!&
        \!\!\!0.300\!\!\!&
        -1.00\\ 
        \midrule
        %
        %
        MILP-R & 
        100\% &
        \!\!\!100\%\!\!\! &
        \!\!\!0.212\!\!\! &
        -0.48 &
        100\%&
        \!\!\!100\%\!\!\! &
        \!\!\!0.059\!\!\! &
        1.00&
        100\%&
        \!\!\!100\%\!\!\!&
        \!\!\!0.018\!\!\!&
        -0.88 &
        100\%&
        \!\!\!100\%\!\!\!&
        \!\!\!0.031\!\!\!&
        1.00\\
         \midrule
        ROAR & 
        82\% &
        \!\!\!14\%\!\!\! &
        \!\!\!0.078\!\!\! &
        0.95 &
        88\%&
        \!\!\!34\%\!\!\! &
        \!\!\!0.074\!\!\! &
        1.00&
        82\%&
        \!\!\!78\%\!\!\!&
        \!\!\!0.031\!\!\!&
        -0.80&
        62\%&
        \!\!\!60\%\!\!\!&
        \!\!\!0.047\!\!\!&
        1.00\\
         \midrule
         \textbf{\ourmethod} & 
        100\% &
        \!\!\!100\%\!\!\! &
        \!\!\!0.077\!\!\! &
        1.00&
        100\%&
        \!\!\!100\%\!\!\! &
        \!\!\!0.042\!\!\! &
        1.00&
        100\%&
        \!\!\!100\%\!\!\!&
        \!\!\!0.009\!\!\!&
        0.44&
        100\%&
        \!\!\!94\%\!\!\!&
        \!\!\!0.028\!\!\!&
        1.00\\
        \bottomrule

    \end{tabular}
    }
  
    \vspace{5mm}
\end{table*}

\subsection{Worst-case vs Average-case Guarantees}
\label{ssec:different_angles}

\begin{figure}[b]
    \centering
    \includegraphics[width=0.8\linewidth,trim={1.1cm 0.6cm 3cm 2.4cm},clip]{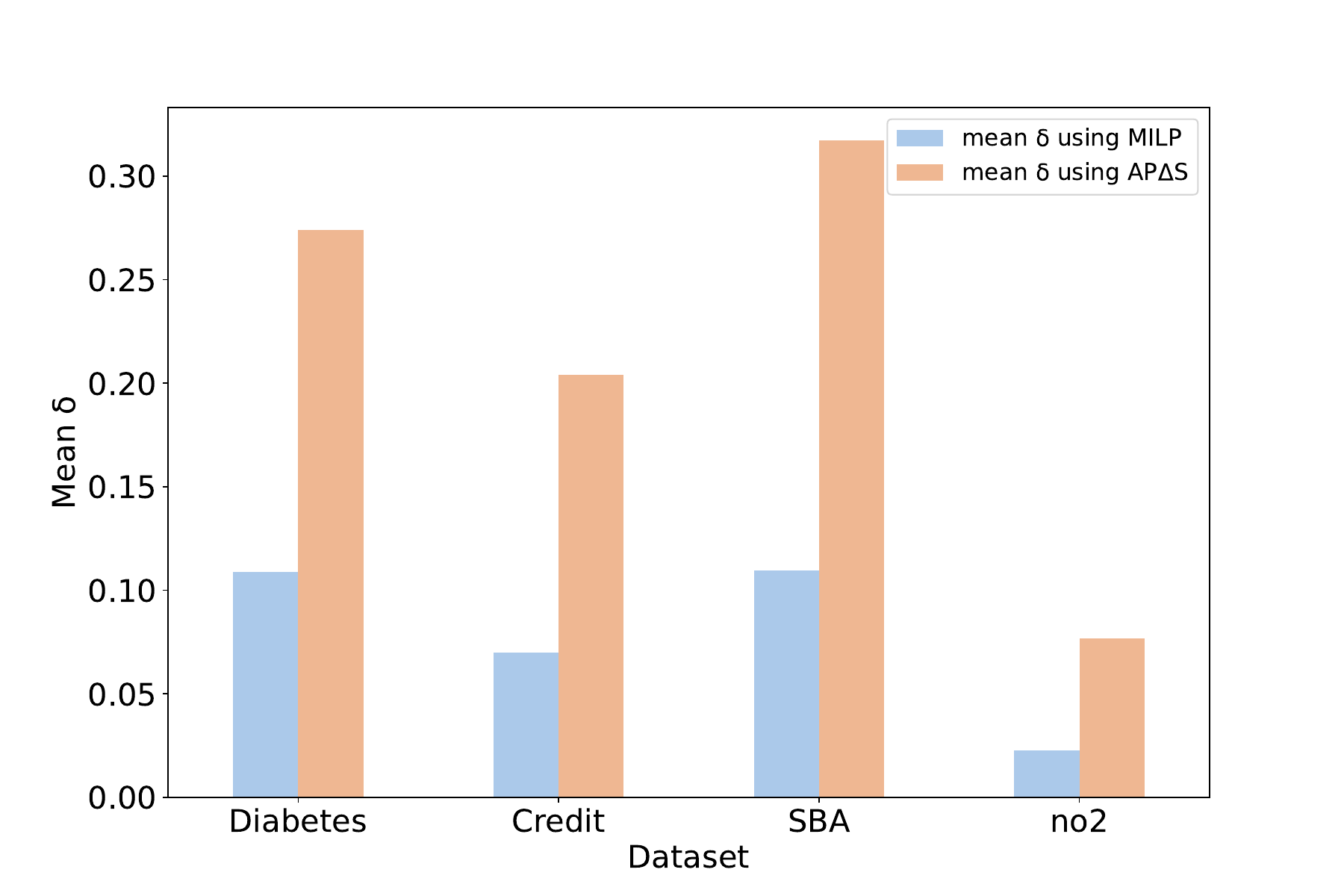}
     \vspace{3mm}
    \caption{Average robust $\delta$ obtained using MILP
    and \ourmethod.}
    \label{fig:mean_delta_comparison}
     \vspace{5mm}
\end{figure}

This set of experiments aims to compare the probabilistic guarantees offered by \ourmethod with the worst-case guarantees offered by~\cite{Jiang_Leofante_Rago_Toni_2023}. What we aim to show here is that adopting an average-case certification perspective may be more practical in some circumstances, as worst-case guarantees may be unnecessarily conservative. Our approach aims to obtain a $\delta_{max}$ for which the CFX is robust with confidence $\alpha$ for at least a fraction $R$ of model shifts in $\Delta$. This is in stark contrast with the worst-case reasoning of~\cite{Jiang_Leofante_Rago_Toni_2023}, where even a single realization of $\Delta$ for which the CFX is not robust results in the corresponding $\delta$ being discarded. 

To show why such strict guarantees may not be needed, we use an analogous experimental setup and the training process of~\cite{Jiang_Leofante_Rago_Toni_2023}, which considers four datasets: \textit{Diabetes} (continuous) \cite{smith1988using}, \textit{Credit} (heterogeneous) \cite{Dua2019}, \textit{no2} (continuous) \cite{OpenML2013} and \textit{Small Business Administration} (SBA) (continuous features) \cite{studentDS}. In particular, the Credit and the SBA datasets are known to contain distribution shifts~\cite{UpadhyayJL21} and are typically used to assess robustness under model changes. For the training procedure of the classifier, we randomly shuffle each dataset and split it into two halves, denoted $\mathcal{D}_1$ and $\mathcal{D}_2$. First, we use $\mathcal{D}_1$ to train a base neural network; then we use both $\mathcal{D}_1$ and $\mathcal{D}_2$ to train a shifted model. We then generate $50$ robust CFXs for the base network using the MILP-R and the same $\delta$ values as in~\cite{Jiang_Leofante_Rago_Toni_2023} for a fair comparison. Specifically, we use $\delta=0.11$ for \textit{Diabetes}, $\delta = 0.02$ for \textit{no2}, $\delta = 0.11$ for \textit{SBA} and $\delta = 0.05$ for \textit{Credit}. Subsequently, we evaluate the resulting CFXs by looking at two metrics: \textit{(i)} \textbf{VM1}, the percentage of CFXs that are valid on the base neural network and \textit{(ii)} \textbf{VM2}, the percentage of CFXs that remain valid for the shifted neural network trained using both $\mathcal{D}_1$ and $\mathcal{D}_2$. 

As previously observed by \citet{Jiang_Leofante_Rago_Toni_2023}, the training procedure used to generate shifted models may result in changes that exceed the $\delta$ used to generate provably robust CFXs. Indeed, after inspecting the networks obtained, we noted that the maximum empirical difference observed after retraining (denoted as $\delta_e$) is well above the $\delta$ values used during CFX generation. In particular, we recorded $\delta_e=0.27$ for \textit{Diabetes}, $\delta_e = 0.07$ for \textit{no2}, $\delta = 0.25$ for \textit{SBA} and $\delta_e = 1.28$ for \textit{Credit}. Given the magnitude of these shifts, 
the robustness of the CFXs generated by MILP-R cannot be guaranteed in practice. However, the results show a rather intriguing picture: the \textbf{VM2} metric appears to be unaffected by retraining, and all CFXs remain valid on the respective final models.

These results suggest that certification approaches based on worst-case reasoning may be too strict in practical scenarios. To further understand the implications of worst-case vs average-case reasoning, we adapted Algorithm~\ref{alg:new_approach} to use the certification procedure of Jiang et al., i.e., a MILP solver instead of a sampled-based approach, and compute the maximum provable $\delta^*$ for which the previously generated CFXs are robust (Algorithm~4 in the Appendix). Fig.~\ref{fig:mean_delta_comparison} shows a comparison between the average maximum provable $\delta$ obtained by this procedure and \ourmethod. As we can observe, our average-case guarantees allow to obtain $\delta$ values that are much higher, exceeding the MILP-certified in all instances. This is expected, given the results discussed in Proposition~\ref{th2_eprove}. However, what remains unclear is the impact that these differences might have on the cost and plausibility of CFXs when certification procedures are leveraged to generate CFXs.

\subsection{Generating Robust CFXs using \ourmethod}
\label{ssec:generating}

\begin{algorithm}[b]
\caption{Generation of Robust CFXs}\label{alg:CFX_gen}
\begin{algorithmic}[1]
\small
\STATE \textbf{Input:} FFNN $\model{}$, input $x$ such that $\model{(x)} = c$, set of plausible model shifts $\Delta$, maximum iteration number $\tau$  
\STATE \textbf{Output: } $\Delta$-robust CFX $x'$
\vspace{0.2cm}

\STATE $t \gets 0$ \hfill $\rhd$ iteration number
\WHILE{$t < \tau$ }
    \STATE $x' \gets$ \texttt{ComputeCFX}($x, \model{}$)
    \STATE rate $\gets \ourmethod(\model{},  x', \Delta)$
    \IF{$rate = 1$}
        \STATE \textbf{return} $x'$ \hfill $\rhd$ $x'$ is approx. $\Delta$-robust
    \ELSE
        \STATE increase allowed distance of next CFX 
        \STATE increase iteration number $t$
    \ENDIF
\ENDWHILE

\STATE \textbf{return} no robust CFX can be found

\end{algorithmic}
\end{algorithm}

The results discussed in the previous section have important implications on algorithms for the generation of robust CFXs. Recent works~\cite{Dutta_22,Jiang_Leofante_Rago_Toni_2023,Hammanetal23} have proposed iterative procedures that generate provably robust CFXs by alternating two phases. First, a CFX is generated solving (variations of) Definition~\ref{def:cfx}; then, a robustness certification procedure is invoked on the CFX. If the CFX is robust, then it is returned to the user; otherwise, the search continues, allowing for CFXs of increasing distance to be found. Clearly, the certification step has the potential to affect the CFXs computed in several ways. A robustness test that is too conservative may discard potentially good explanations and keep relaxing the distance constraint until the CFX is deemed robust. Ultimately, this may result in CFXs that exhibit poor proximity and plausibility.  

To test this hypothesis, we adapt the CFX generation algorithm of~\cite{Jiang_Leofante_Rago_Toni_2023} and replace their $\Delta$-robustness test with the one performed by \ourmethod. The complete procedure is shown in Algorithm~\ref{alg:CFX_gen}. 
In detail, after some initialization steps, we compute the first CFX using \texttt{ComputeCFX($x, \mathcal{M}$)} (line 5), which employs the solution proposed in \cite{Jiang_Leofante_Rago_Toni_2023} and presented above. Given a CFX $x'$ and a plausible model shift $\Delta$, at line 6, we employ \ourmethod setting $\alpha=0.999$ and $R=0.995$, thus obtaining $1378$ realizations to perform in the robustness test. If the CFX $x'$ returned by our approximation results robust for all these realizations, then we return it to the user. Otherwise, we increased the allowed distance for the next CFX generation and the iteration number $t$ (lines 10-11).

We then compare the resulting procedure with the four generation algorithm studied in~\cite{Jiang_Leofante_Rago_Toni_2023}: Wacht-R, Proto-R, MILP-R, and finally, ROAR~\cite{UpadhyayJL21}. Notably, ROAR is specifically designed to generate robust CFXs under plausible model shifts using average-case certification. 
Using the same datasets and training procedures of §~\ref{ssec:different_angles}, we generate $50$ CFXs for each dataset. We evaluate CFXs based on their proximity, measured by the $\ell_1$ distance, and plausibility, measured by the local outlier factor (\textbf{lof}) which determines if an instance is within the data manifold by quantifying the local data density \cite{BreunigKNS00} ($+1$ for inliers, $-1$ otherwise). We average $\ell_1$ and \textbf{lof} over the generated CFXs. We also report \textbf{VM1} and \textbf{VM2} for completeness.
The results obtained, which we report in Table~\ref{tab:results}, confirm our hypothesis. Indeed, \ourmethod produces the best results across all datasets, always generating CFXs with high plausibility and better proximity. Notably, \ourmethod outperforms ROAR as well, producing CFXs that retain a higher degree of validity after retraining.

\section{Conclusions}
We studied the problem of robustness of CFXs with respect to plausible model shifts. We proved for the first time that certifying the robustness of CFX with respect to this notion is an NP-complete problem. This results motivates the quest of new scalable algorithm to certify robustness under PMS. We then compared with existing methods to generate robust CFXs with probabilistic guarantees and showed that these approaches may not be directly applicable to the PMS setting. We then introduced \ourmethod, a novel scalable approach for probabilistic robustness certification, and used it to generate robust CFXs under model shifts. Our results demonstrate the advantages of our approach, outperforming SOTA methods on a range of metrics. 
This paper opens several avenues for future work. Firstly, while our experiments only focused on DNNs, there seems to be no reason why \ourmethod could not be applied to other parameterized models for which the notion of plausible model changes holds. Secondly, it would be interesting to investigate improvements to the approximation scheme presented here to further tighten the robustness guarantees our framework can offer. Finally, it would be interesting to apply \ourmethod to other notions of robustness studied in the literature, such as robustness to noisy execution.

\bibliography{mybibfile}

\clearpage
\clearpage
\appendix

\section{Supplementary Material}

\subsection{The proof of Lemma \ref{lem:diff}}

\noindent
{\bf Lemma \ref{lem:diff}.}
{\em     Naturally-occurring model shifts may not be plausible, and vice-versa.}
\begin{proof}
We showed already that there are examples of \textit{plausible model shifts} that cannot be characterized as \textit{naturally occurring model changes}. Here we complete the argument. 

Although the existence of \textit{naturally occurring model shifts} not being plausible is implicit in the definition, for the sake of completeness, 
we provide an example network.

Consider a DNN having a single
input value $x$ and a single parameter $\theta$ and computing the function 
$$\model{\theta}(x) = ReLU(0.5 - ReLU (x - \theta))$$  

  \begin{figure}[h!]
        \centering
        \scalebox{0.9}{\begin{tikzpicture}[scale=0.8, every node/.style={scale=0.7}]

  
  \node[circle,draw=black, minimum width=0.75cm,fill=maygreen25] (input_1) at (0,0) {\Large$1$};
  \node[circle,draw=black, minimum width=0.75cm,fill=maygreen25] (input_2) at (0,-2) {\Large$x$};
\node[circle,draw=black, minimum width=0.75cm,fill=maygreen25] (input_3) at (0,-4) 
{\Large$1$};
\node[circle,draw=black, minimum width=0.75cm,fill=red25] (output) at (5,0) 
{\Large$y$};
  
  \node[circle,draw=black, minimum width=0.75cm,fill=lila25] (hidden_1) at (3,0) {};
  \begin{scope}[xshift=3cm,scale=0.7]
        \relua
    \end{scope}
  
  \node[circle,draw=black, minimum width=0.75cm,fill=lila25] (hidden_2) at (2,-2) {};
  \begin{scope}[xshift=2cm, yshift=-2cm,scale=0.7]
        \relua
    \end{scope}

  \draw[->] (input_1) edge node[above]{{$0.5$}} (hidden_1);

  \draw[->] (input_2) edge node[below]{$1$} 
  (hidden_2);

  \draw[->] (input_3) edge node[below, xshift=0.15cm]{$-\theta$} 
  (hidden_2);
  
  \draw[->] (hidden_2) edge node[above, xshift=-0.35cm]{{$-1$}} (hidden_1);

  \draw[->] (hidden_1) edge node[above]{$1$}  (output); 
  
\end{tikzpicture}}
        \vspace{3mm}
        \caption{The DNN considered in this proof}
        \label{fig:proofA1}
        \vspace{5mm}
    \end{figure}
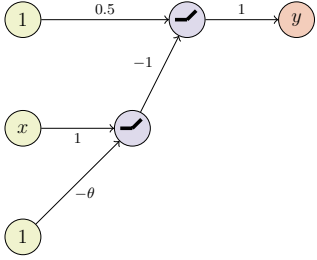

Fix a data set ${\cal X}$ and let $\theta = \max_{x \in {\cal X}} x.$ 
Let us consider the set of model shifts 
$\Sigma = \{S_{\tau}\mid \tau \in \mathbb{R}_+\}$ defined by  $S_{\tau}(\model{\theta}) = \model{\theta+\tau}.$
Clearly for any $\tau \geq 0$, we have 
$$\model{\tau+\theta}(x) = \model{\theta}(x) = 0.5,$$ for any $x \in {\cal X}.$
This trivially implies that $\Sigma$ is a set of naturally occurring model shifts (all shifts considered have exactly the same value in all points in ${\cal X}$).

The claim now follows by observing that there is no finite $\delta$ such that the corresponding set of plausible model shifts 
$\Delta = \{ \mshift \mid \dist{p}{\model{\theta}}{\mshift(\model{\theta})} \leq \delta\}$
contains $\Sigma.$


\end{proof}

\subsection{Plausible model shifts vs naturally-occurring model shifts: additional results}

\begin{table*}[t!]
    \centering
    \caption{Empirical evaluation across model perturbations of increasing magnitude $\delta$ and different sample sizes $n$.}
    \label{tab:compare}
    \vspace{5mm}
    \resizebox{\textwidth}{!}{
    \begin{tabular}{lcccccccccccc}
        \toprule &
        \multicolumn{4}{c}{\textbf{\textit{Credit}}} &
        \multicolumn{4}{c}{\textbf{\textit{Spam}}} &
        \multicolumn{4}{c}{\textbf{\textit{News}}} \\
        \toprule &
         \multicolumn{2}{c}{$n = 1000$} & \multicolumn{2}{c}{$n = 10000$} & \multicolumn{2}{c}{$n = 1000$} & \multicolumn{2}{c}{$n = 10000$} & \multicolumn{2}{c}{$n = 1000$} & \multicolumn{2}{c}{$n = 10000$}\\
        \cline{2-13} 
        &\vspace{-2mm}&  && & &  &&  & &  & \\
        &Avg diff. & Rej. (\%) & Avg diff. & Rej. (\%) & Avg diff. & Rej. (\%) & Avg diff. & Rej. (\%) & Avg diff. & Rej. (\%) & Avg diff. & Rej. (\%)\\ \midrule
        $\delta = 0.05$ & 0.008 & 90 & 0.022 & 90 & 0.018 & 50 & 0.017 & 70 & 0.034 & 70 & 0.033 & 80\\ \midrule
        $\delta = 0.1$ & 0.017 & 100 & 0.047 & 100 & 0.034 & 100 & 0.035 & 100 & 0.064 & 80 & 0.063& 100\\ \midrule
        $\delta = 0.2$ & 0.046 & 100 & 0.086 & 100 & 0.0748 & 90 & 0.064 & 100 & 0.127 & 90 & 0.141& 100\\ \midrule
        $\delta = 0.3$ & 0.110 & 100 & 0.140 & 90 & 0.121 & 100 & 0.087 & 100 & 0.207 & 90 & 0.173 & 100\\ \bottomrule
    \end{tabular}
    }

\end{table*}

We considered three binary classification datasets commonly used in XAI:
\begin{itemize}
    \item the \textit{credit} dataset~\cite{Dua2019}, which is used to predict the credit risk of a person (good or bad) based on a set of attribute describing their credit history;
    \item the \textit{spambase} dataset~\cite{misc_spambase_94} is used to predict whether an email is to be considered spam or not based on selected attributes of the email;
    \item the \textit{online news popularity} dataset~\cite{misc_online_news_popularity_332}, referred to as \textit{news} in the following, is used to predict the popularity of online articles.
\end{itemize}

We trained a neural network classifier with two hidden layers (20 and 10 neurons, respectively) for each dataset and used a Nearest-Neighbor Counterfactual Explainer~\cite{guidotti2022counterfactual} to generate counterfactual explanations for $10$ different inputs. After generating a counterfactual, we produce $n$ different perturbations $\model{{\theta'}}$ of the original neural network $\model{{\theta}}$ for $n \in \{1000, 10000\}$ under \textit{plausible} model shift with $\Delta \in \{0.05, 0.1, 0.2, 0.3\}$. 
We then considered two measures:
\begin{itemize}
    \item average difference in output between $\model{{\theta}}$ and $\model{{\theta'}}$, for each of the $n$ model $\model{{\theta'}}$ and across all CFXs;
    \item for each counterfactual, we perform a one-sided t-test to check whether the average prediction generated by $n$ models $\model{{\theta'}}$ equals the original prediction of $\model{{\theta}}$. We report the percentage of CFXs for which the null hypothesis was rejected (p-value used $0.05$).
\end{itemize}

Table~\ref{tab:compare} reports our results. We observe that the requirement that the expected output of perturbed models remains equal to the original prediction is often violated. This complements the result of Lemma~\ref{lem:diff}, confirming that the two notions of model changes are orthogonal. In the following we focus on certification approaches for robustness under plausible model shifts.

\subsection{The proof of Theorem \ref{th:hardness-CFX}}

\noindent{\bf Theorem \ref{th:hardness-CFX}.}
{\em     {\sc DRPIA} is NP-complete.}

\begin{proof}
    We can show that {\sc 3-SAT} reduces to {\sc DRPIA}.
    In particular, given a formula $\phi$ we can produce an DNN $\model{}$, an input $x$ and 
    a shifts on the edges, defining a INN $\intervalnet$ such that $\phi$ is satisfiable 
    if and only if there exists another DNN $\model{}'$ which is also a realization
    of $\intervalnet$ such that $0.5 \leq \model{}'(x) $ and  $\model{}(x) < 0.5.$

The INN
$\intervalnet$ is an {\em interval abstraction under a set of plausible shifts $\Delta$.} Precisely, 
this means that the reduction produces an INN where all intervals on the edges are 
of the same width $2\delta$

\begin{figure}[h!]
        
        \centering
        \includegraphics[width=0.8\linewidth]{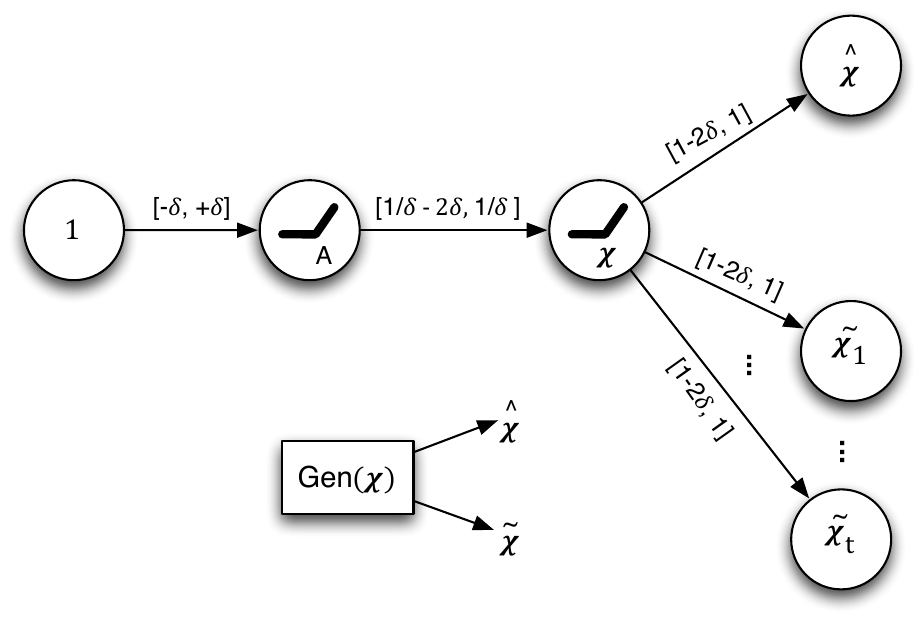}
        \vspace{3mm}
        \caption{ Generating-gadget }
        \label{fig:var-gen}
        \vspace{5mm}
    \end{figure}

\begin{figure}[h!]
        
        \centering
        \includegraphics[width=0.75\linewidth]{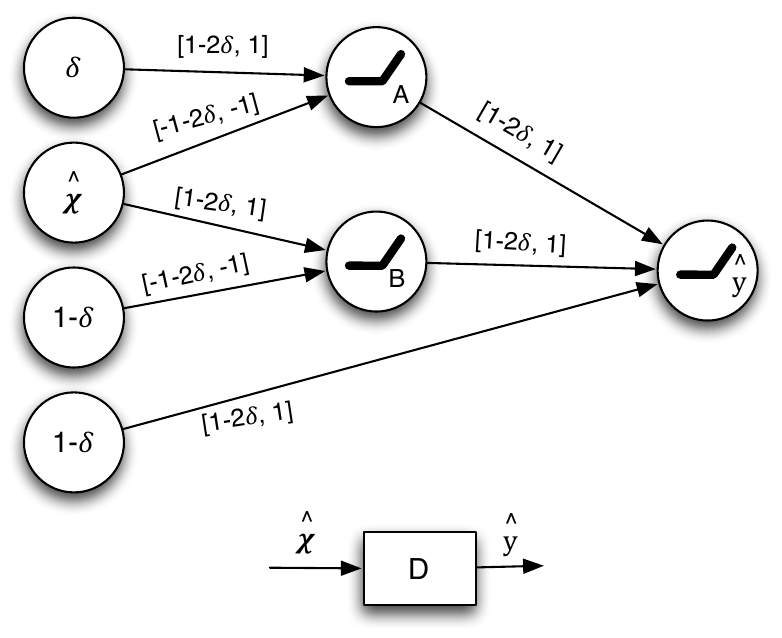}
        \vspace{3mm}
        \caption{Discretizer.}
        \label{fig:Discretizer}
        \vspace{5mm}
    \end{figure}

The following lemmas (Lemmas \ref{Discretizer}-\ref{end-gadget}) are easily verified by direct inspection of the possible output values of the 
nodes in the respective network-gadgets they refer too.  They provide the key properties of the 
gadgets we are using for the reduction.

\begin{lemma}[Discretizer-gadget] \label{Discretizer}
For the {\em Discretizer}-gadget in figure \ref{fig:Discretizer} the following holds: We have $\hat{y}=1$ if and only if the input value $\hat{\chi}$ is binary, i.e., $\hat{\chi} \in \{0,1\}.$

In addition, if $\hat{\chi}$ is the corresponding output of a {\em Generating}-gadget with $t+1$ many outputs, $\hat{\chi}, \tilde{\chi}_1, \dots, \tilde{\chi}_t,$ (see Fig. \ref{fig:var-gen}), then 
$\hat{y} = 1$ iff for all $i \in [t]$, $\tilde{\chi}_i\in \{0\} \cup [1-2\delta, 1].$ In particular,  $\hat{y} = 1$ if and only if either $\tilde{\chi}_i = 0$ and $\hat{\chi} = 0$ or $\tilde{\chi}_i \in [1-2\delta, 1]$ and $\hat{\chi} = 1.$
\end{lemma}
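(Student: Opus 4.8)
The plan is to establish Lemma~\ref{Discretizer} purely by direct case analysis on the architecture of the Discretizer-gadget, since the lemma itself advertises that it is ``easily verified by direct inspection.'' First I would write down the exact arithmetic function that the gadget computes as a function of its input $\hat{\chi}$, tracing $\hat{\chi}$ through each ReLU node and each (interval) edge weight up to the single output $\hat{y}$. The goal of the gadget is to act as an indicator for binarity: it should output $1$ exactly when $\hat{\chi} \in \{0,1\}$ and drop strictly below $1$ for any intermediate value. A natural way to realize this with ReLUs is to build a ``tent''-shaped or piecewise-linear bump that peaks at the endpoints $0$ and $1$ and dips in between (for instance computing something like $1 - \relu(\hat\chi) - \relu(1-\hat\chi) + \text{(correction)}$, or a composition that vanishes only at the two integer points). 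I would verify the ``only if'' direction by showing that for any $\hat{\chi} \in (0,1)$ the computed value is strictly less than $1$, and the ``if'' direction by plugging in $\hat{\chi}=0$ and $\hat{\chi}=1$ and checking the output equals $1$ in both cases.

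Next I would handle the second, more delicate assertion, which couples the Discretizer to the Generating-gadget of Fig.~\ref{fig:var-gen}. Here the relevant fact is that the Generating-gadget, built only from intervals of width $2\delta$ on a fixed input, emits a tuple $(\hat\chi, \tilde\chi_1,\dots,\tilde\chi_t)$ whose auxiliary coordinates $\tilde\chi_i$ are forced into the union $\{0\}\cup[1-2\delta,1]$ precisely when $\hat\chi$ is pinned to a Boolean value. I would argue the equivalence $\hat{y}=1 \iff \forall i\in[t],\ \tilde\chi_i \in \{0\}\cup[1-2\delta,1]$ by combining the first part of the lemma (binarity of $\hat\chi$) with the known input/output relation of the Generating-gadget, so that $\hat\chi=0$ is consistent only with all $\tilde\chi_i=0$ and $\hat\chi=1$ only with all $\tilde\chi_i\in[1-2\delta,1]$. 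The final ``in particular'' clause is then just the explicit statement of these two consistent configurations, obtained by matching the value of $\hat\chi$ to the forced range of each $\tilde\chi_i$.

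The main obstacle I anticipate is not conceptual but bookkeeping: I must pin down the precise edge weights and biases of the Discretizer so that the piecewise-linear output is \emph{exactly} $1$ at the two integer inputs and \emph{strictly} below $1$ everywhere in the open interval, and simultaneously confirm that the width-$2\delta$ intervals do not blur this threshold. In particular I would check that the $[1-2\delta,1]$ slack appearing in the statement is exactly what the interval widths of the Generating-gadget allow, so the tolerance in the Discretizer's binarity test matches the tolerance with which the Generating-gadget can produce a ``true.'' Since the figures fix the constants, this reduces to evaluating a small number of ReLU compositions at the boundary and interior points and confirming the strict inequality is preserved under the interval realization; once those finitely many evaluations are done, both the equivalence and the ``in particular'' refinement follow immediately.
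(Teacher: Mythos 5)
Your plan coincides with the paper's own treatment: the paper disposes of this lemma (together with the other gadget lemmas) purely ``by direct inspection of the possible output values of the nodes'' in the gadget figures, which is exactly the trace-through-and-case-split you describe --- part one by evaluating the Discretizer's piecewise-linear response at binary versus interior values of $\hat\chi$, part two by coupling that binarity test with the fact that the Generating-gadget drives $\hat\chi$ and all the $\tilde\chi_i$ from a common internal value through width-$2\delta$ interval weights, which is also precisely how the main reduction later invokes the lemma. The only caveat is that your proposed internal realization of the Discretizer (the ``tent''/bump construction) is a hedged guess at what the figure contains, so your argument becomes a finished proof only once the figure's actual weights are substituted into your case analysis --- but that is no less explicit than the paper's own one-line inspection claim.
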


\begin{figure}[h!]
        
        \centering
        \includegraphics[width=\linewidth]{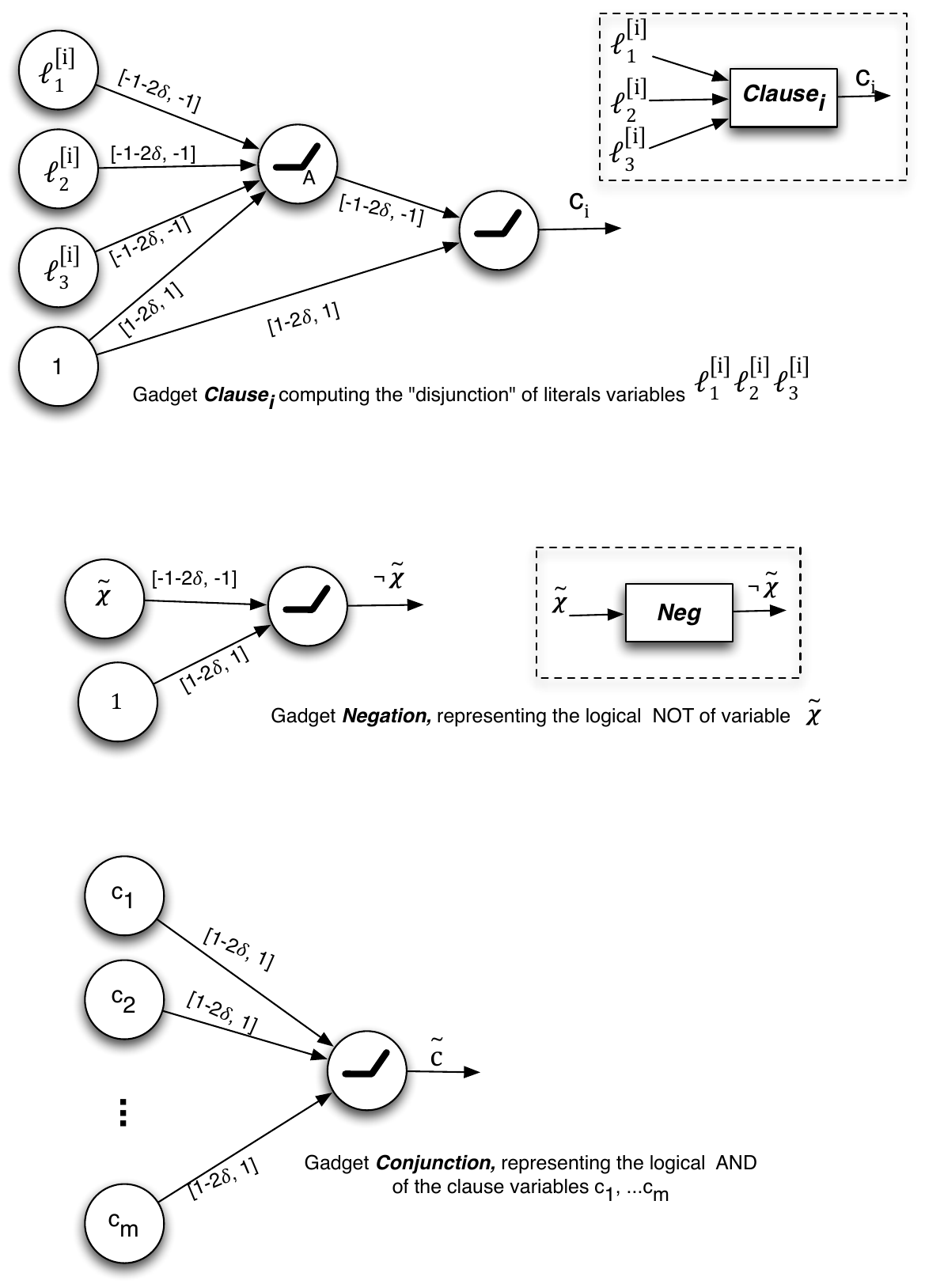}
        \vspace{3mm}
        \caption{LOGICALPORTS}
        \label{fig:LogicalPorts}
        \vspace{5mm}
    \end{figure}

\begin{lemma}[Negation-gadget] \label{not-gadget}
With reference to the {\em Negation}-gadget in Fig.\ref{fig:LogicalPorts},
the following holds. Assume $\hat{\chi} \in \{0\} \cup [1-2\delta, 1].$ Then, $\neg \tilde{\chi} \in [0, 2\delta] \cup [1-2\delta, 1].$ In particular, we have that $\neg\tilde{\chi} \in [0, 2\delta]$ iff 
$\tilde{\chi} \in [1-2\delta, 1]$ and 
$\neg\tilde{\chi} \in  [1-2\delta, 1]$ iff 
$\tilde{\chi} = 0$
\end{lemma}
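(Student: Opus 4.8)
The plan is to prove the lemma exactly as the remark preceding Lemma~\ref{Discretizer} suggests, namely by direct inspection of the ReLU computation carried out by the \emph{Negation}-gadget. First I would read off from Fig.~\ref{fig:LogicalPorts} the piecewise-linear map the gadget computes on its single input (the value that the hypothesis constrains to $\{0\}\cup[1-2\delta,1]$); because every edge weight ranges over an interval of width $2\delta$, this is not one function but the set-valued map whose reachable output must be bounded. Concretely the gadget realises a negation as a single $\relu$ node combining a constant input with a negatively-weighted copy of $\tilde{\chi}$, so the object of interest is the reachable set of $\neg\tilde{\chi}$ obtained as the interval weights and $\tilde{\chi}$ range over their admissible values.

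The proof then splits according to the two pieces of the hypothesis. In the case $\tilde{\chi}=0$ the $\tilde{\chi}$-dependent term vanishes and the output reduces to a $\relu$ of the constant branch alone, which I would show ranges exactly over $[1-2\delta,1]$ as the width-$2\delta$ weight on that branch varies. In the case $\tilde{\chi}\in[1-2\delta,1]$ I would use that for every fixed realisation of the weights the preactivation is \emph{monotone decreasing} in $\tilde{\chi}$, so it suffices to evaluate at the two endpoints $\tilde{\chi}=1-2\delta$ and $\tilde{\chi}=1$ together with the extreme admissible weight choices, yielding an output contained in $[0,2\delta]$. Taking the union over the two cases gives the claimed containment $\neg\tilde{\chi}\in[0,2\delta]\cup[1-2\delta,1]$.

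For the two \emph{iff} statements I would observe that the target windows $[0,2\delta]$ and $[1-2\delta,1]$ are disjoint whenever $\delta<1/4$, which the reduction guarantees by its choice of $\delta$; hence membership of $\neg\tilde{\chi}$ in one window forces the input into exactly one branch of the hypothesis. Both directions of each equivalence then follow by re-reading the two-case computation: the $\tilde{\chi}=0$ branch is the only one producing output in $[1-2\delta,1]$, and the $\tilde{\chi}\in[1-2\delta,1]$ branch is the only one producing output in $[0,2\delta]$.

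I expect the main obstacle to be the book-keeping on the interval weights rather than any conceptual difficulty: one must check that \emph{every} realisation of the width-$2\delta$ weights keeps the output inside the stated windows (the over-approximation direction) while simultaneously confirming that the extreme realisations actually attain the endpoints $2\delta$ and $1-2\delta$, so that the windows are tight and the equivalences are genuine. This reduces to inspecting a handful of worst-case weight assignments once the exact gadget constants are taken from Fig.~\ref{fig:LogicalPorts}, which is precisely the ``direct inspection'' the lemma's preamble alludes to.
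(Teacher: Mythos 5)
Your proposal is correct and takes essentially the same route as the paper, which dispatches this lemma (like the other gadget lemmas) by exactly the direct inspection you describe: a two-case analysis on $\tilde{\chi}=0$ versus $\tilde{\chi}\in[1-2\delta,1]$, interval propagation through the single $\relu$ node combining a constant branch with a negatively-weighted copy of the input, and disjointness of the windows $[0,2\delta]$ and $[1-2\delta,1]$ for the small $\delta$ the reduction enforces (the paper's $\delta<2/25$ more than suffices). Your explicit treatment of the converse directions of the two equivalences via window disjointness is, if anything, more careful than the paper's one-line appeal to inspection.
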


\begin{lemma}[Clause-gadget] \label{clause-gadget}
With reference to the {\em Clause}-gadget in Fig.\ref{fig:LogicalPorts},
the following holds: Assume 
$\ell^{[i]}_1, \ell^{[i]}_2, \ell^{[i]}_3 \in [0,2\delta]\cup[1-2\delta, 1].$ 
Then, 
\begin{itemize}
    \item $c_i \in [0, 8\delta+2\delta^2]$ iff 
    $\ell^{[i]}_1, \ell^{[i]}_2, \ell^{[i]}_3 \in [0,2\delta]$;
    \item $c_i \in [1- 4\delta-4\delta^2, 1]$ iff there is $t\in\{1,2,3\}$
    such that $\ell^{[i]}_t \in [1-2\delta,1]$;
\end{itemize} 
\end{lemma}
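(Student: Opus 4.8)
The plan is to treat Lemma~\ref{clause-gadget} as a direct interval-propagation calculation through the Clause-gadget of Fig.~\ref{fig:LogicalPorts}, exploiting the hypothesis that each literal $\ell^{[i]}_t$ lies in one of the two narrow intervals $[0,2\delta]$ (encoding \emph{false}) or $[1-2\delta,1]$ (encoding \emph{true}). First I would write down the function $c_i$ computed by the gadget as a composition of its interval-weighted edges (each weight ranging over an interval of width $2\delta$) and its $\relu$ nodes, expressing $c_i$ symbolically in terms of the three literal values and the free weight realizations. On each linear region of the $\relu$s, $c_i$ is multilinear in the weights and in the literal inputs, so its extreme values over the admissible box of inputs-and-weights are attained at vertices of that box. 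This reduces the entire statement to evaluating the gadget at the finitely many combinations of endpoint values, which is precisely the ``direct inspection'' the statement alludes to.

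Next I would split into the two regimes dictated by the hypothesis. In the all-false regime, where $\ell^{[i]}_1,\ell^{[i]}_2,\ell^{[i]}_3 \in [0,2\delta]$, I would push the three inputs (each bounded by $2\delta$) through the interval weights (bounded in magnitude by $1+2\delta$) and the $\relu$, and bound the output from above. The key is that each edge contributes at most its nominal weight times the bound $2\delta$ on its input, plus a cross term from the edge's own $2\delta$ weight-slack acting on that same $O(\delta)$ input, i.e.\ a term of order $\delta^2$; summing these contributions across the gadget together with the gadget's constant offsets should yield exactly the stated upper bound $8\delta+2\delta^2$, while nonnegativity of the $\relu$ output gives the lower bound $0$. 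In the satisfied regime, where some $\ell^{[i]}_t \in [1-2\delta,1]$, I would instead lower-bound the contribution of that single true literal, charging the worst-case weight realization an additive penalty of order $\delta$ and $\delta^2$, and argue that the gadget clips the aggregate at $1$, so that $c_i \le 1$ while remaining at least $1-4\delta-4\delta^2$.

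The two forward implications obtained this way immediately give the converse directions of each ``iff'': under the lemma's hypothesis every input tuple falls into exactly one of the two regimes, and for all sufficiently small $\delta$ the target intervals $[0,8\delta+2\delta^2]$ and $[1-4\delta-4\delta^2,1]$ are disjoint, so $c_i$ lands in one interval precisely when the corresponding regime holds. I expect the only delicate part to be the careful worst-case bookkeeping of the $\delta$ and $\delta^2$ error terms as they accumulate through the interval-weighted edges and the $\relu$ nodes, namely verifying that the accumulated slack is exactly the quoted $8\delta+2\delta^2$ (and not a slightly larger expression) and, crucially, that neither regime can ever drive $c_i$ into the forbidden gap strictly between the two intervals. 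Once the gadget's symbolic form is fixed, everything else is routine substitution at the box vertices.
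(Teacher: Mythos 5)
Your logical skeleton is sound and matches what the paper actually does: the paper proves this lemma only ``by direct inspection of the possible output values'' of the gadget, and your observation that the two forward inclusions, together with mutual exclusivity of the two regimes and disjointness of the target intervals (which requires $8\delta+2\delta^2 < 1-4\delta-4\delta^2$, i.e.\ $\delta$ small; the paper records $\delta<2/25$ in Lemma~\ref{conjunction-gadget}), automatically yield both ``iff'' directions is exactly the right way to dispose of the converses.

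There are, however, two genuine gaps. First, the device you invoke to make the inspection ``routine'' is false as a general principle: on each linear region a ReLU circuit is indeed multilinear, but the linear regions are not sub-boxes, so piecewise multilinearity does \emph{not} imply that extrema over the weight-and-input box are attained at vertices of the box. ReLU gadgets routinely have interior extrema; in this very construction, the Discretizer-gadget (Lemma~\ref{Discretizer}) computes a hat-shaped function of $\hat\chi$ whose minimum over $[0,1]$ is attained strictly inside the interval, not at an endpoint. So ``evaluate at the finitely many endpoint combinations'' is not a valid reduction. What does work — and is what ``direct inspection'' means here — is layer-by-layer interval arithmetic through the gadget: its outer bounds are all one needs for the two forward inclusions, provided they come out no larger than the stated intervals. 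Second, and more importantly, your proposal defers precisely the step that constitutes the lemma: the constants $8\delta+2\delta^2$ and $1-4\delta-4\delta^2$ are the ``delicate bookkeeping'' that you announce (``I expect \dots to yield exactly the stated bound'') but never perform. Without fixing the Clause-gadget's actual wiring — which interval weights ($[1-2\delta,1]$ versus $[-1-2\delta,-1]$) sit on which edges, the bias terms, and how the three literal values are aggregated and clipped — neither forward inclusion is established, nor the upper bound $c_i\le 1$ in the satisfied regime. As it stands, the submission is a correct plan for a proof, with the proof itself (the interval computation that produces those specific constants) missing.
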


\begin{lemma}[Conjunction-gadget] \label{conjunction-gadget}
With reference to the {\em Conjunction}-gadget in Fig.\ref{fig:LogicalPorts},
the following holds. Assume that for each $j=1, \dots, m,$ it holds that
$c_j \in [0, 8\delta+2\delta^2] \cup [1- 4\delta-4\delta^2].$
If $\delta < 2/25$ then 
\begin{itemize}
    \item $8\delta+2\delta^2 < 1-4\delta - 4\delta^2$;
    \item $\tilde{c} = m$ iff $c_i = 1$ for each $i=1, \dots, m$;
    \item $\tilde{c} \leq (m-1)+8\delta + 2\delta^2$ iff there exists $j \in \{1, \dots, m$ such that 
    $c_j \in [0, 8\delta+2\delta^2$
\end{itemize} 
\end{lemma}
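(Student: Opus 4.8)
The plan is to first read off from Figure~\ref{fig:LogicalPorts} the function computed by the Conjunction-gadget, namely the single summation node $\tilde{c} = \sum_{j=1}^{m} c_j$ with unit incoming weights (since every clause output $c_j$ is non-negative, a $\relu$ placed on this node acts as the identity and can be ignored). Once this is established, all three items reduce to elementary estimates of a sum whose summands are, by hypothesis, each confined to the \emph{low} set $L = [0,\,8\delta+2\delta^2]$ or the \emph{high} set $H = [1-4\delta-4\delta^2,\,1]$. I would therefore organise the argument as a direct case analysis on how many summands fall in $L$ versus $H$, exactly in the spirit of the ``verification by inspection of node values'' announced for the gadget lemmas.

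For the first item I would simply rearrange $8\delta+2\delta^2 < 1-4\delta-4\delta^2$ into $6\delta^2 + 12\delta - 1 < 0$ and locate the positive root of $6t^2+12t-1$ at $t=\tfrac{-6+\sqrt{42}}{6}\approx 0.0801$, which lies just above $\tfrac{2}{25}=0.08$; hence the inequality holds for all $0<\delta<\tfrac{2}{25}$. This separation is precisely what makes $L$ and $H$ disjoint and correctly ordered, i.e.\ $\max L = 8\delta+2\delta^2 < 1-4\delta-4\delta^2 = \min H$, so that ``low'' and ``high'' are genuinely distinguishable and in particular $1\in H$ and $0\in L$.

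The second item is immediate from the summation form: since $c_j \le 1$ for every $j$, we have $\tilde{c} = \sum_j c_j \le m$, with equality exactly when each $c_j$ attains its maximum $1$, giving both directions of the ``iff'' at once. For the forward ($\Leftarrow$) half of the third item, if some $c_{j_0}\in L$ then bounding that summand by $8\delta+2\delta^2$ and each of the remaining $m-1$ summands by $1$ yields $\tilde{c} \le (m-1) + 8\delta+2\delta^2$ directly.

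The remaining ($\Rightarrow$) half of the third item is where the quantitative care is needed, and I expect it to be the main obstacle. Arguing by contraposition, if no $c_j$ lies in $L$ then every $c_j\in H$, so $\tilde{c} \ge m(1-4\delta-4\delta^2)$, and to contradict $\tilde{c}\le (m-1)+8\delta+2\delta^2$ one needs $m(1-4\delta-4\delta^2) > (m-1)+8\delta+2\delta^2$, i.e.\ $1 > (4m+8)\delta + (4m+2)\delta^2$. Crucially, this does \emph{not} follow from $\delta<\tfrac{2}{25}$ alone for every $m$: the per-clause slack of width $O(\delta)$ accumulates additively across the $m$ summands of $\tilde c$, so a clean global threshold separating the ``all clauses satisfied'' regime from the ``at least one clause violated'' regime forces $\delta$ to be taken small enough relative to the number of clauses (roughly $\delta = O(1/m)$). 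I would make this dependence explicit and check that the $\delta$ fixed by the overall reduction meets it; controlling this error accumulation, rather than the routine node-value inspection underlying the other items, is the crux of the argument.
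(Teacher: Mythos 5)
Your reconstruction of the gadget --- a single ReLU node summing the clause outputs, so that $\tilde{c}=\sum_j c_j$ on non-negative inputs --- is exactly how the paper reads it, and the paper offers nothing more by way of proof: Lemmas~\ref{Discretizer}--\ref{end-gadget} are dispatched there as ``easily verified by direct inspection of the possible output values of the nodes''. Your treatment of the first item (the positive root of $6t^2+12t-1$ is $\frac{-6+\sqrt{42}}{6}\approx 0.0801$, just above $\tfrac{2}{25}$, so the inequality holds for all $\delta<\tfrac{2}{25}$), of the second item, and of the ($\Leftarrow$) half of the third item is precisely that inspection, carried out correctly.

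The obstacle you isolate in the ($\Rightarrow$) half of the third item is not a weakness of your write-up; it is a genuine error in the lemma as stated. Your inequality $1 > (4m+8)\delta + (4m+2)\delta^2$ is exactly what the contrapositive requires, and it is not implied by $\delta<\tfrac{2}{25}$: already for $m=2$ and $\delta=0.079$ one may take $c_1=c_2=1-4\delta-4\delta^2\approx 0.659\in H$, which gives $\tilde{c}\approx 1.318\le (m-1)+8\delta+2\delta^2\approx 1.644$ although no $c_j$ lies in $L$, while all hypotheses of the lemma (including the separation in the first item) are met. So the third item is false without an additional requirement of the form $\delta=O(1/m)$, which is the repair you propose; that repair is available for free, since the reduction constructs $\delta$ as part of the instance and may take it polynomially small in the formula size. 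Two further remarks. First, the error is harmless for Theorem~\ref{th:hardness-CFX}: the correctness argument for the reduction only ever uses the second item (through Lemma~\ref{end-gadget}, since $z=1/2$ forces $\tilde{c}=m$ and hence $c_i=1$ for all $i$); the third item is never invoked. Second, a minor caution: the edges entering the summation node carry interval weights $[1-2\delta,1]$ rather than exactly $1$. This leaves the second item's ``only if'' direction and the third item's ($\Leftarrow$) direction intact (all weights are $\le 1$), but it lowers the all-$H$ lower bound to $m(1-2\delta)(1-4\delta-4\delta^2)$, making the smallness requirement on $\delta$ that you identified marginally stronger still.
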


\begin{figure}[h!]
        
        \centering
        \includegraphics[width=0.5\linewidth]{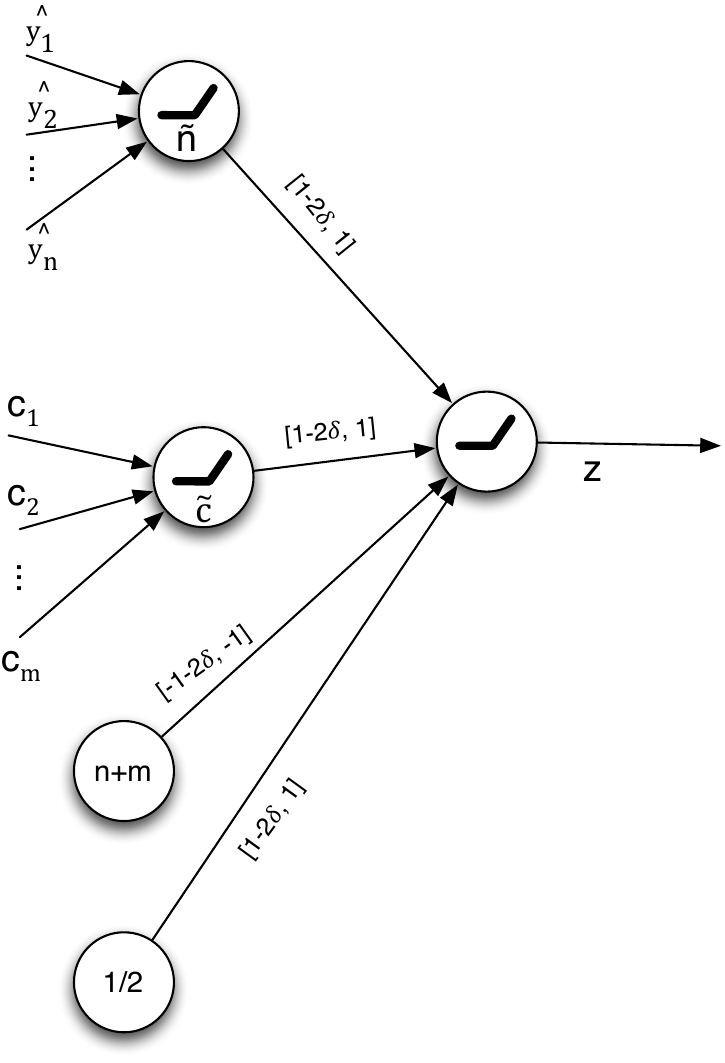}
        \vspace{3mm}
        \caption{End-gadget}
        \label{fig:End-gadget}
         \vspace{5mm}
    \end{figure}

\begin{lemma}[End-gadget] \label{end-gadget}
With reference to the {\em End}-gadget in Fig.\ \ref{fig:End-gadget}) we have the following:
$z = 1/2$ iff $\tilde{n} = n$ and $\tilde{c} = m,$ i.e. (using also Lemma \ref{clause-gadget}
\begin{itemize}
    \item for each $j=1, \dots, n$ it holds that $\hat{y}_j= 1$;
    \item for each $i=1, \dots, m$ it holds that $c_i= 1$;
\end{itemize}
\end{lemma}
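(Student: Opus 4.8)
The plan is to verify the claim by direct inspection of the End-gadget, reducing everything to the ranges of its two inputs $\tilde{n}$ and $\tilde{c}$ that are already pinned down by the preceding lemmas. First I would record these ranges. By Lemma~\ref{Discretizer}, each discretizer output satisfies $\hat{y}_j \in \{0,1\}$, so the aggregated quantity $\tilde{n} = \sum_{j=1}^n \hat{y}_j$ is an integer in $\{0,1,\dots,n\}$; in particular $\tilde{n} \le n$ always, and $\tilde{n} = n$ holds if and only if $\hat{y}_j = 1$ for every $j$. Symmetrically, by Lemma~\ref{conjunction-gadget} the conjunction output is either exactly $\tilde{c} = m$ (precisely when every clause value $c_i = 1$) or else $\tilde{c} \le (m-1) + 8\delta + 2\delta^2 < m$. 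Thus both ``deficits'' $n - \tilde{n}$ and $m - \tilde{c}$ are nonnegative, and their sum $D \delequal (n-\tilde{n}) + (m-\tilde{c})$ vanishes if and only if $\tilde{n}=n$ and $\tilde{c}=m$.

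The essential quantitative observation, inherited from the regime $\delta < 2/25$ of Lemma~\ref{conjunction-gadget}, is that these deficits are bounded away from zero whenever they are nonzero: if $\tilde{n} < n$ then $n - \tilde{n} \ge 1$, while if $\tilde{n}=n$ but $\tilde{c} < m$ then $m - \tilde{c} \ge 1 - 8\delta - 2\delta^2 > 0$. Consequently, in every configuration other than the target one we have $D \ge 1 - 8\delta - 2\delta^2 > 0$, a strictly positive separation margin. This is the single inequality on which the whole argument rests.

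Next I would read off the affine-plus-$\relu$ map that the End-gadget of Fig.~\ref{fig:End-gadget} computes as a function of $(\tilde{n},\tilde{c})$. It is engineered so that $z$ is monotone in the total deficit $D$, attaining its maximal value $1/2$ exactly at $D=0$ and strictly smaller values for $D>0$. The \emph{if} direction is then immediate: substituting $\tilde{n}=n$ and $\tilde{c}=m$ gives $D=0$, hence $z = 1/2$. For the \emph{only if} direction I would argue by contraposition: if $\tilde{n} < n$ or $\tilde{c} < m$ then $D \ge 1 - 8\delta - 2\delta^2 > 0$, and feeding this strictly positive deficit through the gadget yields $z < 1/2$. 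Finally I would chain the component lemmas to recast the arithmetic condition ``$\tilde{n}=n$ and $\tilde{c}=m$'' into the Boolean form stated: Lemma~\ref{Discretizer} identifies $\tilde{n}=n$ with $\hat{y}_j = 1$ for all $j$ (every generating-gadget encodes a genuine $\{0,1\}$ assignment), and Lemmas~\ref{clause-gadget}--\ref{conjunction-gadget} identify $\tilde{c}=m$ with $c_i = 1$ for all $i$ (every clause is satisfied).

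The main obstacle I anticipate is bookkeeping rather than conceptual. Since I am reasoning without the explicit edge weights of the End-gadget in hand, I must confirm that its affine coefficients and $\relu$ offsets are calibrated so that (a) the maximal input $(n,m)$ is mapped exactly onto the threshold value $1/2$, and (b) the separation margin $1 - 8\delta - 2\delta^2$ suffices to push every admissible non-target input strictly below $1/2$. Both requirements are guaranteed by the small-$\delta$ regime already assumed, so the verification collapses to two evaluations of the gadget map (at the target and in its complement) together with the inequality $8\delta + 2\delta^2 < 1$, exactly the ``direct inspection'' the lemma advertises.
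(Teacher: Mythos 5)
Your argument is correct and coincides with the paper's approach: the paper dispatches this lemma purely by direct inspection of the gadget, whose output is $z=\tilde{n}+\tilde{c}-(n+m)+1/2$ (as used explicitly in point 4 of the satisfiability direction of the reduction), so $z=1/2$ holds exactly when the two nonnegative deficits $n-\tilde{n}$ and $m-\tilde{c}$ both vanish --- which is precisely your deficit argument combined with Lemmas \ref{Discretizer}--\ref{conjunction-gadget}. One small caution: Lemma \ref{Discretizer} only states that $\hat{y}_j=1$ iff the discretizer's input is binary, not that $\hat{y}_j\in\{0,1\}$; the fact you actually need, namely $\hat{y}_j\le 1$ (so that $\tilde{n}\le n$ with equality iff all $\hat{y}_j=1$), is itself a gadget-inspection fact rather than a consequence of the stated lemmas, which is consistent with the calibration check you rightly acknowledge deferring to the figure.
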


\begin{figure}[h!]
        
        \centering
        \includegraphics[width=\linewidth]{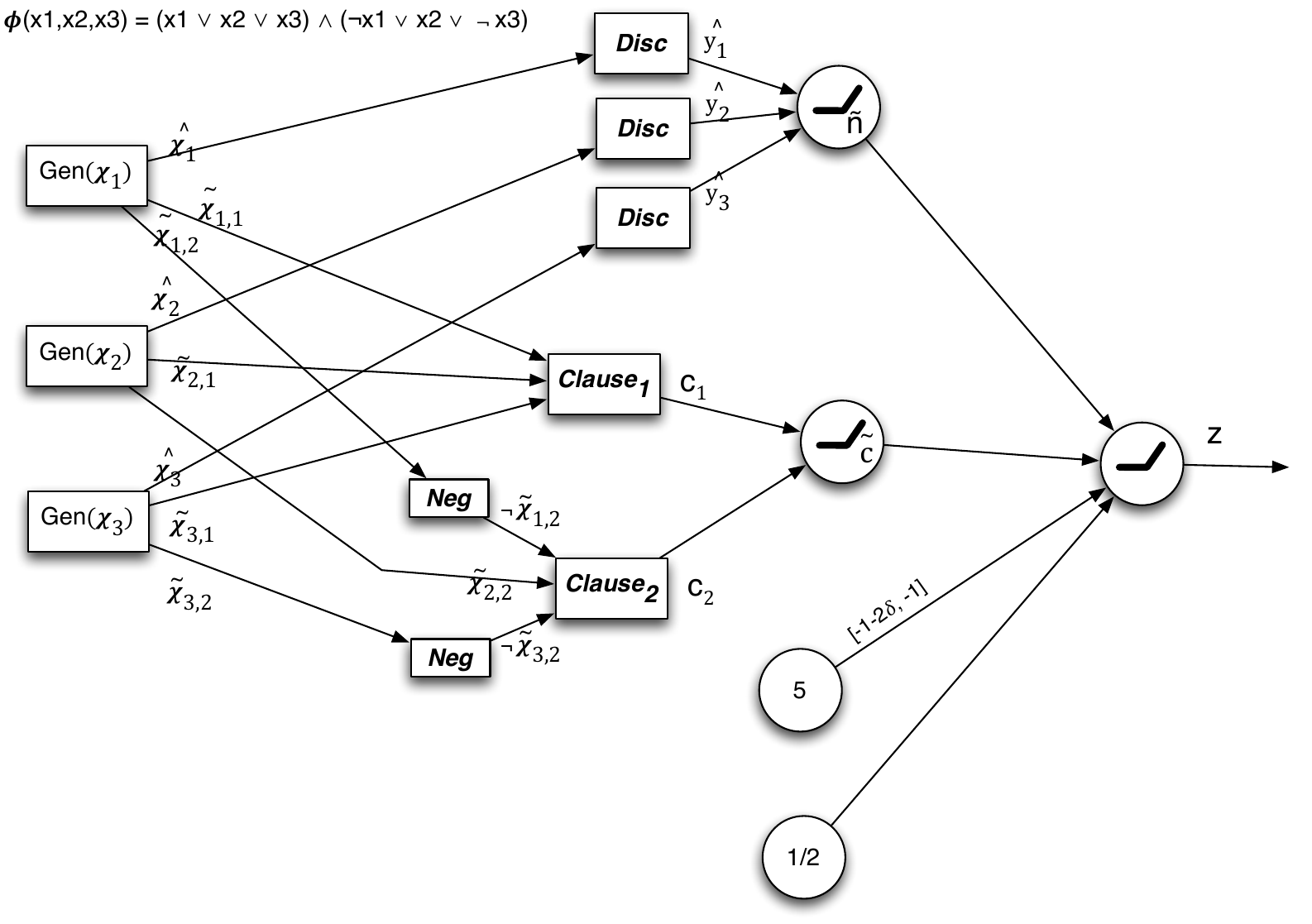}
         \vspace{3mm}
\caption{A complete Example of the reduction on a simple formula, with $n = 3$ variables and $m=2$ clauses. All the 
interval weights not explicitly given are  $[1-2\delta, 1]$}
        \label{fig:Formula-Example}
         \vspace{5mm}
    \end{figure}

\noindent 
{\bf Final part of the argument.}
Fix a 3-CNF $\phi(x_1, \dots, x_n).$ 
Consider the INN built as in the example in Fig. \ref{fig:Formula-Example}:
for each variable $x_i$ add to the network  a {\em Generating}-gadget with one output $\hat{\chi}_i$ and as many outputs $\tilde{\chi}_i$ as the number of 
occurrences of $x_i$ in the formula $\phi$ (in positive or negated form).

Send each output $\hat{\chi}_i$ to a distinct copy of the 
{\em Discretizer}-gadget, and refer to the corresponding output as $\hat{y}_i.$

For each clause $C_j = (\lambda^{(j)}_1 \vee \lambda^{(j)}_2 \vee \lambda^{(j)}_3)$ ($j=1, \dots, m$) of $\phi$ add a {\em Clause}-gadget, henceforth referred to as $Clause_j.$ 
For each $t = 1, 2, 3,$ 
\begin{itemize}
    \item if $\lambda^{(j)}_t$ corresponds to the positive variable $x_i$ then connect the input of $Clause_j$ labelled $\ell^{(j)}_t$ to a distinct output $\tilde{\chi}_{i,r}$ of the $i$th Generating-gadget.
    \item if $\lambda^{(j)}_t$ corresponds to the negated variable $\neg x_i$ then connect the input of $Clause_j$ labelled $\ell^{(j)}_t$ to the output of a negation gadget and the input of such negation gadget to a distinct output $\tilde{\chi}_{i,r}$ of the $i$th Generating-gadget.
\end{itemize}

Connect all the outputs $\hat{y}_i$ ($i=1, \dots, n$) of the Clause-gadget to a RELU node whose output we refer to as $\tilde{n}.$ 

Connect all the outputs $c_j$ ($j=1, \dots, m$) of the Clause-gadget to a RELU node (as in a Conjunction gadget) whose output we refer to as $\tilde{c}.$

Finally, connect $\tilde{n}, \tilde{c}$ as in the End-gadget (Fig. \ref{fig:End-gadget}).


\bigskip

\noindent
{\bf If $\phi$ is satisfiable then there is a choice of the parameters (weights) 
such that 
$z = 1/2.$}

Let  ${\bf a} = (a_1, \dots, a_n)$ be an assignment such that\footnote{For the ease of the presentation, we are identifying the logical value {\tt true} with the boolean value 1 and the 
logical value {\tt false} with the Boolean value 0, both for the variables and the value computed by $\phi$.} 
$\phi({\bf a}) = 1$.
For each $i=1, \dots, n,$ 
\begin{itemize} 
\item if $a_i = 1$ then choose the first two weights in the {Generating}-gadget of Fig. \ref{fig:var-gen} to be respectively $\delta$ and $1/\delta,$ so that we have that $\chi_i = 1$
\item if $a_i = 0$ then choose the first two weights in the {Generating}-gadget of Fig. \ref{fig:var-gen} to be respectively $0$ and $1/\delta,$ so that we have that $\chi_i = 0$
\end{itemize}

Choose the remaining two weights of the generating gadget to be $1$, hence $\hat{\chi}_i = \tilde{\chi}_i = a_i.$

Set  the weights in the rest of the network to $1$ or $-1$ according to whether the interval they are taken from is $[1-2\delta, 1]$ or $[-1-2\delta, -1],$ respectively.

Then, it is not hard to verify the following values computed by the nodes of the network. 
\begin{enumerate}
\item $\hat{y}_i = 1$ for each $j=1, \dots, n$ and $\tilde{n} = n.$ \label{pt1}.
\item for each $i=1, \dots, m,$ the output $c_i$ of the {\em Clause}-gadget corresponding to the $i$-th clause of $\phi$ will have value $c_i = 1.$ For this, it is sufficient to notice that, with the chosen weights, each value  $\ell_t^{[i]}$ is equal to the value that the assignment ${\bf a}$ induces on the  literal  
$\lambda^{(i)}_t.$ By assumption on the assignment ${\bf a}$ there exists a $t$ such that $\lambda^{(i)}_t = 1,$ hence $\ell_t^{[i]} = 1$ which together with the choice of the weights in the corresponding conjunction gadget, implies $c_i = 1.$ 
\item $\tilde{c} = m$ (following from the previous point) \label{pt4}.
\item $z = \tilde{n} + \tilde{c} - (n+m) + 1/2 = 1/2$ (from points \ref{pt1} and \ref{pt4}). 
\end{enumerate}

\bigskip

\noindent
{\bf If there exists a choice of the weights such that $z = 1/2$ then $\phi$ is satisfiable.}

By Lemma \ref{end-gadget}, it must hold true that 
$\tilde{c} = m$ and $\tilde{n} = n$ 
whence 
\begin{enumerate}
    \item for each $i=1, \dots, m$ $c_i = 1$
    \item for each $j=1, \dots, n$ $\hat{y}_j = 1$
\end{enumerate}

By Lemma \ref{Discretizer}, because of item 2, we must have that for each $j=1, \dots, n$ it holds that
$\hat{\chi}_j \in \{0,1\},$ and in particular, 
\begin{itemize}
    \item for each 
$j$ such that $\hat{\chi}_j = 0$ we have that for each $s,\, \tilde{\chi}_{j,s} = 0$;
    \item for each $j$ such that $\hat{\chi}_j = 1 $ we have that for each $s,\, \tilde{\chi}_{j,s} \in [1-2\delta, 1]$;
\end{itemize} 

Therefore, all the literal values $\ell^{[i]}_t$ satisfy the assumption on the domain in the statement of Lemma \ref{clause-gadget}. For this, notice that the literal values are either equal to some $\tilde{\chi}_j$ or to the output of a {\em Negation}-gadget with input $\tilde{\chi}_j.$ Hence the claim holds by using also Lemma \ref{not-gadget}.

Since $c_i = 1$, by Lemma \ref{clause-gadget} we have that 
there exists $t \in \{1, 2, 3\}$ such that $\ell^{[i]}_t \in \{1-2\delta, 1\}.$

We now define the assignment ${\bf a} = (a_1, \dots, a_n)$ by setting $a_j = \hat{\chi}_j,$
for each $j=1, \dots, n.$

We claim that $\phi({\bf a}) = 1.$

Suppose, by contradiction, that there exists some clause $C_i$ for which the assignment ${\bf a}$ makes all the literals equal to $0.$

By hypothesis, we have that the activation value $c_i = 1.$ Hence, by Lemma \ref{clause-gadget} there exists 
$t \in \{1,2,3\}$ such that
$\ell^{[i]}_t \in [1-2\delta, 1].$ Let us now consider the corresponding literal $\lambda^{(i)}_t$ in clause $C_i.$ We are going to show that the value induced on it by the assignment ${\bf a}$ must be 1.

\noindent
{\em Case 1.} $\lambda^{(i)}_t = x_j$ (for some $j=1, \dots, n$), i.e., the literal corresponds to a non-negated variable. 

Then, by construction, there is an $r$ such that $\ell^{[i]}_t$ is the output $\tilde{\chi}_{j,r}$ of the generating gadget associated to variable $x_j.$
By Lemma \ref{Discretizer}, since
$\hat{y}_j = 1$ and $\tilde{\chi}_{j,r} \in [1-2\delta, 1]$ it holds that, with the assignment ${\bf a}$ we have $\hat{\chi}_j = 1,$ whence 
$\lambda^{(i)}_t = x_j = a_j = \hat{\chi}_j = 1$ contradicting the absurdum hypothesis that all literals of $C_i$ are 0 (false).

\medskip

\noindent 
{\em Case 2.} $\lambda^{(i)}_t = \neg x_j$ (for some $j=1, \dots, n$), i.e., the literal corresponds to a negated variable. 

Then, by construction, there is an $r$ such that $\ell^{[i]}_t$ is the output 
of a negation-gadget whose input is  
$\tilde{\chi}_{j,r}$ of the generating gadget associated to variable $x_j.$

Since $\hat{y}_j = 1$ we have that  $\tilde{\chi}_{j,r} \in \{0\} \cup [1-2\delta, 1].$ Hence by Lemma \ref{not-gadget}, since the output $\ell^{[i]}_t \in [1-2\delta, 0],$ we must have that the input
$\tilde{\chi}_{j,r} = 0.$ It follows, by 
Lemma \ref{Discretizer} that $a_j = \hat{\chi}_j = 0.$ Therefore, with the assignment ${\bf a},$ we have
$\lambda^{(i)}_t = \neg x_j = (1- a_j) = 1,$ contradicting the absurdum hypothesis that all literals of $C_i$ are assigned a value 0 (false) by ${\bf a}$.

Since in either case we reach a contradiction, we have shown that all
clauses are satisfied by the assignment ${\bf a},$ i.e., $\phi({\bf a}) = 1.$

The proof is complete. 
\end{proof}

\section{Exact enumeration method}

As sketched in Section~\ref{ssec:soundness}, a possible approach to exactly enumerate the portion of an interval abstraction for which the CFX $x'$ is not robust is to recursively split in half each INN's interval into two equal parts until each INN resulting from the split allows to determine that $x'$ is entirely robust or non-robust. Algorithm~\ref{alg:exact} below formalizes these ideas.

In lines 3-5 we initialize data structures needed to keep track on the portion of INNs that are robust, non-robust and yet to be decided. We iteratively compute the output reachable set of the INN and check whether the robustness condition is met (lines 9-11) or violated (lines 12-14) and remove the corresponding INN from the stack of undecided INNs. Otherwise, we choose an interval in $\intervalnet$ and split it into half to create two new INNs, which we add to the stack. We repeat this procedure until the stack of undecided INNs is empty or an threshold $\epsilon$ on the precision of this procedure is met. Once the loop terminates, we return the set of INNs for which the input CFX is robust.

\begin{algorithm}[h!]
\caption{\textit{Exact CFX $\Delta$-Robustness}}\label{alg:exact}
\begin{algorithmic}[1]
\small
\STATE \textbf{Input:} An INN $\mathcal{N}$ and a CFX $x'$ and an maximum $\epsilon$-precision for the splitting phase
\STATE \textbf{Output: } set of INNs for which $x'$ is robust.
\vspace{0.1cm}

\STATE robust\_INNs $\gets \emptyset$ 
\STATE non-robust\_INNs $\gets \emptyset$ 
\STATE unknown $\gets \;\texttt{Push}(\mathcal{N})$

\vspace{0.1cm}

\WHILE{(unknown $\neq \emptyset$) or $(\epsilon$-precision not reached)}
    \STATE $\intervalnet \gets \texttt{GetINNToVerify}$(unknown)
    \STATE  $\mathcal{R}_{\intervalnet} \gets \texttt{ComputeReachableSet}(\intervalnet,\; x')$
    \IF{$\texttt{lower}(\mathcal{R}_{\intervalnet}) \geq 0.5$}
        \STATE robust\_INNs $\gets\;\texttt{Push}(I)$ 
        \STATE unknown $\gets \texttt{Pop}(\intervalnet)$
    \ELSIF{$\texttt{upper}(\mathcal{R}_{\intervalnet}) < 0.5$}
        \STATE non-robust\_INNs $\gets\;\texttt{Push}(\intervalnet)$ 
        \STATE unknown $\gets \texttt{Pop}(\intervalnet)$
    \ELSE
        \STATE $\intervalnet', \intervalnet'' \gets \texttt{ChooseIntervalToSplit}(\intervalnet)$
        \STATE unknown $\gets \;\texttt{Push}(\intervalnet', \intervalnet')$   
    \ENDIF   
\ENDWHILE

\STATE \textbf{return} robust\_INNs
\end{algorithmic}

\end{algorithm}

\newpage
\section{Computing a provable $\Delta_{max}$}

Algorithm~\ref{alg:new_approach} can be modified to compute the maximum admissible shift under worst-case guarantees. In a nutshell, this simply requires replacing the robustness test performed by \ourmethod with the MILP-based certification procedure. For completeness we report the resulting procedure in Algorithm~\ref{alg:MILP_binary}.

\begin{algorithm}[h!]
\caption{Provable Plausible $\Delta$-Shift}\label{alg:MILP_binary}
\begin{algorithmic}[1]
\small
\STATE \textbf{Input:} Model $\model{\theta}$, CFX $x'$, $\alpha$, $R$ 
\STATE \textbf{Output: } $\delta_{max}$
\vspace{0.1cm}

\STATE $\delta_{init} \gets 0.0001$
\STATE rate $\gets \texttt{MILP}(\model{\theta}, x', \delta_{init})$
 \IF{rate $\neq 1$}
    \STATE \textbf{return} $0$ \hfill $\rhd$ no robustness
\ENDIF
\vspace{0.1cm}
\STATE $\delta \gets \delta_{init}$
\WHILE{rate $= 1$}
    \STATE $\delta \gets 2\delta$
    \STATE rate $\gets \texttt{MILP}(\model{\theta},  x', \delta)$
\ENDWHILE

\STATE $\delta_{max} \gets \delta/2$

\WHILE{True}
    \IF{$\vert \delta - \delta_{max}\vert \leq \delta_{init}$}
        \STATE \textbf{return} $\delta_{max}$
    \ENDIF
    \STATE $\delta_{new} \gets (\delta_{max} + \delta)/2$
    \STATE rate $\gets \texttt{MILP}(\model{\theta}, x', \delta_{new})$
    \IF{rate $= 1$}
        \STATE $\delta_{max} \gets \delta_{new}$
    \ELSE
        \STATE $\delta \gets \delta_{new}$
    \ENDIF
\ENDWHILE
\end{algorithmic}
\end{algorithm}

\end{document}